\newtheorem{example}{Example}
\def\BibTeX{{\rm B\kern-.05em{\sc i\kern-.025em b}\kern-.08em
    T\kern-.1667em\lower.7ex\hbox{E}\kern-.125emX}}
\newcommand{\TsM}{\ensuremath{\text{TsM}}\xspace}
\newcommand{\dataset}{\ensuremath{S}\xspace}
\newcommand{\variables}{\ensuremath{V}\xspace}
\newcommand{\target}{\ensuremath{T}\xspace}
\newcommand{\clauses}{\ensuremath{n}\xspace}
\newcommand{\states}{\ensuremath{N}\xspace}
\newcommand{\voting}{\ensuremath{v}\xspace}
\newcommand{\precision}{\ensuremath{s}\xspace}
\newcommand{\encsqc}{\ensuremath{\rho}\xspace}
\newcommand{\encts}{\ensuremath{\varrho}\xspace}
\newcommand{\hamming}{\ensuremath{H}\xspace}
\newcommand{\newvar}[2]{\ensuremath{v_{#1,#2}}\xspace}
\newcommand{\notrob}{\ensuremath{{\sf NotRob}}}
\newcommand{\notsim}{\ensuremath{{\sf NotSim}}}
\newtheorem{definition}{Definition}
\begin{document}

\title{Verifying Properties of Tsetlin Machines
}

\author{\IEEEauthorblockN{  Emilia Przybysz}
\IEEEauthorblockA{\textit{University of Bergen} \\
%\textit{name of organization (of Aff.)}\\
Bergen, Norway \\
\url{Emilia.Przybysz@student.uib.no} }
\and
\IEEEauthorblockN{  Bimal Bhattarai}
\IEEEauthorblockA{\textit{University of Agder} \\
%\textit{name of organization (of Aff.)}\\
Agder, Norway \\
\url{bimal.bhattarai@uia.no}}
\and
\IEEEauthorblockN{  Cosimo Persia}
\IEEEauthorblockA{\textit{University of Bergen} \\
%\textit{name of organization (of Aff.)}\\
Bergen, Norway \\
\url{cosimo.persia@uib.no}}
\and
\IEEEauthorblockN{  Ana Ozaki}
\IEEEauthorblockA{\textit{University of Bergen} \\
%\textit{name of organization (of Aff.)}\\
Bergen, Norway \\
\url{ana.ozaki@uib.no}}
\and 
\IEEEauthorblockN{$\quad\quad\quad\quad\quad\quad\quad\quad\quad\quad$  Ole-Christoffer Granmo}
\IEEEauthorblockA{$\quad\quad\quad\quad\quad\quad\quad\quad\quad\quad$\textit{University of Agder} \\
%\textit{name of organization (of Aff.)}\\
$\quad\quad\quad\quad\quad\quad\quad\quad\quad\quad$ Agder, Norway \\
$\quad\quad\quad\quad\quad\quad\quad\quad\quad\quad$ \url{ole.granmo@uia.no}}
\and
\IEEEauthorblockN{  Jivitesh Sharma}
\IEEEauthorblockA{\textit{University of Agder} \\
%\textit{name of organization (of Aff.)}\\
Agder, Norway \\
\url{jivitesh.sharma@uia.no}}
}

\maketitle

\begin{abstract}
Tsetlin Machines (TsMs) are a promising and interpretable 
machine learning method which can be applied for various classification tasks. 
We present an exact encoding of TsMs into propositional logic
and formally verify properties of TsMs  using a SAT solver. In particular, we introduce in this work a notion of similarity of machine learning models and apply our notion to check for similarity of TsMs.
We also consider notions of robustness and equivalence from the literature and adapt  them for TsMs.
Then, we show the correctness 
of our encoding and provide results for the properties: adversarial   robustness,  equivalence, and similarity of TsMs.  
  In our experiments, 
we employ the  MNIST  and IMDB datasets for (respectively) image and sentiment classification. 
We discuss the results for verifying robustness obtained with TsMs with those in the literature obtained with Binarized Neural Networks on MNIST.
\end{abstract}

\begin{IEEEkeywords}
Tsetlin Machine, Binarized Neural Networks, Robustness Verification
\end{IEEEkeywords}

\section{Introduction}
Tsetlin Machines (TsMs) \cite{granmo2018tsetlin} have recently demonstrated competitive accuracy, learning speed, low memory, and low energy 
footprint %, and competitive learning speed 
on several tasks, 
including tasks related to 
image 
classification~\cite{granmo2019convtsetlin,sharma2021dropclause},  natural language classification~\cite{DBLP:conf/ijcai/Yadav0GG22,saha2021disc,berge2019text,yadav2021sentiment},
speech processing~\cite{lei2021kws},
%,bhattarai2021novelty,yadav2021dwr},
spanning tabular data 
\cite{DBLP:journals/access/AbeyrathnaGG21,wheeldon2020learning},
and regression tasks. %% \cite{DBLP:conf/epia/AbeyrathnaGJG19}. 
TsMs  are less prone to overfitting, as the training algorithm does not rely on 
minimising an error function.
Instead, the algorithm uses frequent pattern mining and resource allocation principles to extract common patterns from the data.

Unlike the intertwined nature of pattern representation in neural networks, 
TsMs decompose problems into self-contained patterns that are expressed using 
\emph{monomials}. That is,
a multiplication of Boolean variables (or their negations), 
also called \emph{conjunctive clauses}~\cite{granmo2018tsetlin}.
%conjunction of variables in propositional logic. 
%Each pattern is a rule, such 
%as: \textbf{if} input $\mathbf{x}$ \textbf{satisfies} condition $A$ \textbf{and not} condition $B$ \textbf{then} output  ``true''. 
The %output, in turn, is 
%monomials are used to produce a %
self-contained patterns are combined to form %into a 
classification decision through a majority vote, akin to logistic regression, however, with binary weights and a unit step output function.  
%As they are adopted for 
%A trained \ac{TsM} 
The monomials are used to build an output formula for TsMs.
This formula provides an interpretable explanation of the model,
which %is possible 
is useful to check if decisions are unfair, biased, or 
erroneous. %However, crucial p
%checking other properties
%such as robustness or how certain changes in the data used to train 
%the model affects.  it or  
% They can be naturally expressed in propositional logic 
%where 
%one can perform automated 
%reasoning can be performed in an automated way.

\begin{example}\upshape
TsMs trained for sentiment analysis can create monomials
such as~\cite{DBLP:conf/ijcai/Yadav0GG22}:
\begin{align*}
	C^{+}(1)&={\sf truly}\cdot\overline{{\sf depressive}}\cdot\overline{ {\sf long}}\cdot\overline{ {\sf worst}}\\
	C^{+}(2)&=\overline{ {\sf boring}}\cdot\overline{ {\sf worst}}\cdot\overline{ {\sf long}}\cdot\overline{ {\sf pretentious}}\\
	C^{-}(1)&=\overline{{\sf excellent}}\cdot\overline{ {\sf good}}\cdot\overline{ {\sf like}}\\
	C^{-}(2)&=\overline{ {\sf friendly}}\cdot\overline{ {\sf charming}}\cdot\overline{ {\sf fascinating}}
\end{align*}
where $C^+(i)$ monomials are associated with 
positive sentiment and $C^-(i)$ monomials
are negative. 
Inputs that trigger more monomials of a
certain class will be classified as such. 
E.g., the comment 
%\begin{align*}
``How truly, friendly, charming, and cordial is this unpretentious old serial''
%\end{align*}
triggers both positive monomials of our toy example but only one negative, namely $C^-(1)$, and, by majority vote, it is classified as positive.
\end{example}

%A crucial property of machine learning models is robust
%that is not easy to verify,
%even among the explainable
%However, 
%However, %since the patterns in the data to be recognized can be very complex,
%it is unavoidable that 
%the number of clauses generated  can be large and the patterns recognized can be complex.
%It has been shown that 
%However, 
%Explainable machine learning models, such as TsMs, 
However, current 
machine learning approaches (explainable or not) may not be robust, meaning that % and
small amounts of noise  can make the model change the classification
in unexpected and uncontrolled ways~\cite{DBLP:journals/tec/SuVS19}. 
Non-robustness 
%(whether the classification of the model
%changes if a small amount of noise is present)
poses a real threat to the applicability of machine learning models.
%Robustness to adversarial attacks 
This is of critical consideration because 
%being prone to 
adversarial attacks can  weaken malware detection systems~\cite{10.1007/978-3-319-66399-9_4},
alter commands entered by users through speech recognition software~\cite{DBLP:journals/vlsisp/XieLSLCY21},
pose a security threat for systems that use computer vision~\cite{9059341},
such as self-driving cars, identity verification, medical-diagnosis, among others.

The potential lack of robustness 
of machine learning models has driven the research community into finding strategies to formally verify %machine learning 
properties of such models. 
A desired robustness property (e.g.,  if the model would change the classification of a binary image
if $k$ bits are flipped) 
is not met 
%(e.g., the classification of the model 
%changes if a small number of pixels in the image changes)
if we are able to find a counterexample for
it. The problem is that such counterexamples are spread 
in a vast space of possible examples.
The first strategy to formally verify the robustness of machine learning models
via an exact encoding into a SAT problem
is the one proposed by Narodytska et al. for  binarized (deep) neural networks (BNNs)~\cite{DBLP:conf/aaai/NarodytskaKRSW18}.
This formal logic-based verification approach gives
solid guarantees that corruption will not change the 
classification (up to a predefined upper bound on the number of  corrupted bits, we are 100\% sure the classification will not change for a given verified dataset). 
This
is not possible using pure machine learning based approaches.
The authors present an exact encoding of a trained BNN 
into propositional logic.
Once the robustness property is converted into a SAT problem, one can
formally verify robustness 
%(whether the classification of the model 
%will change if a certain amount of noise is introduced) 
%by applying  
using SAT solvers. 
%However, their code   is not available to the community. 
%Jia et al. present an open-source %improved %  (see also a follow up +
%version of the mentioned strategy for BNNs~\cite{DBLP:conf/nips/JiaR20}.
%They convert the problem of verifying robustness of BNNs into a SAT problem. 
%nd apply a SAT solver to verify robustness.

%in tasks such as image classification.
Our work is the first work that  
investigates robustness properties of TsMs using a SAT solver.
Checking robustness via an exact encoding provides formal guarantees of the model
to %perturbations in the input  of 
adversarial attacks. One can also %study
study relations between  meta-parameters of the model, % and properties of the learning algorithm,
compare different learned models, %find biases, %or find the causes of 
and detect classification errors.
%we
%convert the problem of verifying robustness of TsMs into a SAT problem,
%following an approach
%%introduced 
%for verifying robustness in binarized neural networks.
%
In particular, %we follow
we provide an exact 
encoding into propositional logic that 
captures %we apply to formally verify 
%robustness of 
the classification of TsMs and
%analyse properties of learned \ac{TsM} models
%by 
leverage the capability of modern automated reasoning procedures 
to explore large search spaces and check for 
similarity between models.
%, following the style by  Narodytska et al. \cite{DBLP:conf/aaai/NarodytskaKRSW18}
%for BNNs. 

In this work
we check  {adversarial robustness}, equivalence, and {similarity}.
%Detecting those examples which are \emph{not} ``robustly'' negative
%(that is, they can become positive under small perturbations) is useful since they are
%the most likely candidates for being false negative examples.   
In the mentioned work~\cite{DBLP:conf/aaai/NarodytskaKRSW18},  adversarial robustness is tested for BNNs %for both positive and negative labels 
but they do not provide results for 
equivalence and do not consider similarity. 
%Moreover, they have only considered image classification
%We also perform tests for checking equivalence of TsMs.
%As it is not approximative,
%counterexamples or results found with SAT solvers %perfectly match the
%scenarios encountered with \acp{TsM}.
Our results indicate that TsMs provide competitive %(in fact better)
accuracy results and robustness results when compared to BNNs on tested datasets.  
We test the mentioned properties of TsMs using the   MNIST~\cite{mnist} and the IMDB~\cite{DBLP:conf/acl/MaasDPHNP11} datasets. The results are promising, however, the time consumed for checking robustness increases exponentially on the number of parameters in the worst case (this is an unavoidable shortcoming of formal verification based on SAT solvers since the SAT problem is an NP-hard problem), which challenges the scalability of the approach.   
In the following, we first present, in Section~\ref{sec:preli}, the TsM learning approach
and
provide basic definitions 
that are needed to understand the rational behind the encoding.
In Section~\ref{sec:lr}, we explain how to encode  a TsM
in a propositional formula and
we show the method used to check for robustness,  equivalence, and similarity.
In Section~\ref{sec:exp},
we empirically evaluate the approach and we conclude in Section~\ref{sec:conclusion}. The full version of our paper, with omitted proofs and an appendix about TsMs, is available at
\url{https://arxiv.org/abs/2303.14464}.

\section{Definitions and notations}\label{sec:preli}
We provide basic notions of propositional logic and
the TsM learning and classification algorithm
required to understand how to prove properties of
TsMs. %, starting with the latter.

\subsection{Propositional Logic and Vectors}
We use standard propositional logic 
formulas to define the SAT encoding of TsMs
and the formulas for checking robustness. 
In our notation, we write
$ \V $ %= \{ x_1,\ldots, x_n \} $ 
for  a finite set of \emph{Boolean variables}, used to 
construct our propositional formulas. 
Every variable in $ \V $ is a propositional formula over \V.
We  omit `over \V' since all propositional formulas we speak of are formulated
using (a subset of) symbols from \V. %  if this is clear from the context.
For propositional formulas $\phi$ and $\psi$,
the expressions $(\phi\wedge\psi)$ and $(\phi\vee\psi)$
are propositional formulas, 
the \emph{conjunction} and the \emph{disjunction} of $\phi,\psi$, respectively.
Also, if $\phi$ is a propositional formula then  $\neg \phi$
(the \emph{negation} of $\phi$)
is a propositional formula. 
%~ A \emph{literal} over \V
%~ is either a variable $x \in \V$ or its negation,  $\neg x$. A
%~ literal is \emph{positive}, if it is a variable, and \emph{negative} otherwise.
%~ A \emph{monomial} over \V is a conjunction %$l_1\wedge \ldots\wedge l_n$ 
%~ of literals %$l_i$ 
%~ over \V.
%~ A \emph{disjunction of monomials} is a formula $\phi$ (in \emph{disjunctive normal form} (DNF))
%~ of the form $m_1\vee \ldots\vee m_n$, where each $m_j$ is a monomial.
The semantics is given by \emph{interpretations}, as usual in propositional logic. % (also called \emph{valuations}).
They map each  variable in \V to either ``true'' ($1$) or ``false'' ($0$). 
%An interpretation \Imc \emph{satisfies} a propositional formula $\phi$ %in DNF
%if there is a monomial in $\phi$ such that
%each literal $l$ in it is positive iff \Imc maps $l$   to $1$.
For a   formula $\phi$, we write $\phi_{[\vec{x}\rightarrow {\Imc}]}$ %\nb{O: removed $\encts$ }
for the result of replacing  each $x\in\vec{x}$  %in the domain of \Imc 
in $\phi$ %\nb{O: $\encts$ is strange here. remove?}
by $\Imc(x)$.  

%A \emph{clause} over \V is a disjunction (∨) of literals over V. 

%\paragraph{Polynomial Representation}

%TsMs  can be understood as an alternative
%way of representing  a formula in propositional logic.
%which are  binary classifiers. 
%A TsM \Mmc is a function
%that receives a vector    in the  $ |\Vsf|  $ dimensional space,
%with values in $\{0,1\}$,
%and outputs a classification of this input.
TsMs are trained on classified binary vectors in 
the  $n$-dimensional space, with $n< |\Vsf|$.  
In our work, it is useful to talk about interpretations and their vector representation
interchangeably.
The mapping from interpretations to vectors is defined as follows.
We assume a total order %\nb{total order can be removed} 
on the elements of \Vsf. %and
%and its output is either $ 1 $ or $ 0 $.
%denote by $ \encoding(\Imc) $ 
%the vector in the  $ |\Vsf|  $ dimensional space
%where the element %$\vsf_i$ 
%at position $ i$ in $ \encoding(\Imc) $, denoted 
%\nb{changed here}
%\in\{1, \cdots,n \} $.
Given an interpretation \Imc over %a subset of 
\Vsf, the vector representation of \Imc
in the $n$-dimensional space is of the form
$[\Imc(x_1),\ldots,\Imc(x_n)]$ (note that  \V can have 
more variables).
Also, we
write $ \Imc[i] $
for the value $\Imc(x_i)$ of the $i$-th variable $x_i$.
%is $ 1 $ if $ x_i \in \Imc $ and $ 0 $ otherwise.
%In this work, 
A (binary) \emph{dataset} is a set of elements of the form
$ (\Imc,y) $,
where $y$ (the classification label of \Imc) is either $0$ or $1$ and \Imc is an interpretation, treated as a vector
in the $n$-dimensional space. 
%That is, when \Imc is treated as 
%a vector, we consider it to be of the form $[\Imc(x_1),\ldots,\Imc(x_n)]$, even though \V can have 
%more variables.
%with %$|\Vsf|$ 
%$k$ positions.
The \emph{$n$-hamming distance} of two interpretations
$\Imc $ and $  \Jmc $ over %a subset of
\Vsf,  denoted $ \hamming_n(\Imc,\Jmc) $,
is %\todo{change}
the sum of differing values between \Imc and \Jmc.
%$ \sum_{i=1}^n \mathds{1}_{\Imc[i]\neq \Jmc[i]} %$ where
%$ \mathds{1} $ is the characteristic function.

\subsection{The Tsetlin Machine}\label{sec:tsms}
We now present the main notions for the TsM algorithm (Algorithm~\ref{a:tsm}). 
The algorithm is based on the notion of Tsetlin Automata (TA)~\cite{Tsetlin1961}. %,Narendra1989}.
%\paragraph{The Tsetlin Automata}.
%A Tsetlin Automaton 
A TA is a simple finite automaton 
that performs $ 2 $ actions and updates the
current state according to positive or negative feedback.  We assume the initial state to be the exclude state that is closer to the center. TAs have no final states~\cite{Tsetlin1961}. 
The shaded area in Figure~\ref{f:tsautomaton} shows a TA with 
$4$ states. If a positive feedback is received,
the automaton shifts to a state on the right and 
performs the action labelled in the current state,
otherwise it shifts to a left state.
The TsM algorithm uses a collection of TAs.
Each single TA is represented by just an integer in memory and the {TsM}
learning algorithm (explained later)
%only 
modifies
%learns 
the value of the integer, based on its input.
% A \\ac{TsM} is an extremely powerful, yet simple model.
%%Interpretable model 
% {TsMs} are based on the notion of human-interpretable disjunctive normal form formulas~\cite{karnaugh1953map,valiant}.
% like Karnaugh maps \cite{karnaugh1953map}. %,\nb{?}. %\nb{connect this sentence?}
%We can represent a TsM as a collection of %$ C $ of $ n $ sets of $ 2|\Vsf| $ 
%TAs. 
Each  TA  votes for a specific pattern, inclusion or exclusion of variables 
(the number of variables matches with the  dimension of the TsM input vector). 

An example of a set of TAs voting for a pattern is depicted in Figure~\ref{f:tsautomaton}.
For binary classification, the TsM algorithm divides 
the set of $n$ TAs into two, denoted $C^+$ and $C^-$.
The  TAs in the set $C^+$ vote for a positive label of the input 
while the TAs in $C^-$ vote negatively.
%if the input of the TsM is a vector in the $n$-dimensional space
%then the TsM has $4n$ TAs: a set of $2n$ TAs, denoted $C^1$, 
%vote for a positive label of the input (for $n$ variables we have $2n$ literals), while
% a set
%a set of 
%$2n$ TAs,  denoted $C^0$,
%vote negatively.
%
%ly for a target class
%
% (the input of the TsM represented in  Figure~\ref{f:tsautomaton} 
%is a vector in the $2$-dimensional space).
%One TA for each literal in \Vsf. 
%For binary classification, a \ac{TsM} partitions the $ n $ TAs in two: $ \frac{n}{2} $ sets, 
%denoted $ C^1 $, vote positively for a target class and $ \frac{n}{2} $, denoted  $ C^0 $, vote against it. 
For $k\in\{+,-\} $,
and $ 1\leq j \leq n/2 $,
we denote by $ C^k(j) $ the pattern recognised by the $ j $-th set of TAs
belonging to  group $ C^k $. Each pattern $ C^k(j) $ is represented as 
a \emph{monomial}, that is a multiplication of 
Boolean variables or their negation. For a Boolean variable
$x$ we denote its negation with  $\overline{x} $,
which is the same as $1-x$.
In Figure~\ref{f:tsautomaton}, the monomial is $ x_1 \overline{x_2}  $.
Monomials can be represented as a conjunction of variables
or their negation (e.g., the monomial $ x_1 \overline{x_2}  $
corresponds to $x_1\wedge \neg x_2$).
In previous works on TsMs (e.g.,~\cite{granmo2018tsetlin}), 
the authors have used sums and disjunctions interchangeably.
However, in our work, it is useful to distinguish between the two representations
because the sum of multiple non-zero monomials is 
 counted using the sequential counters in 
Subsection~\ref{sec:satencoding}. That is, $1+1$  is counted as $2$,
which is not the same as $1\vee 1$, evaluated to ``true'' (in symbols, $1$).
%\textcolor{blue}{
It is possible to have a literal and its negation in a monomial. This happens when the training data has some kind of inconsistent information~\cite{9445039}. 
%can be interpreted as \emph{ignorance} of the TsM regarding the truth value of the variable in the literal. The ability of expressing ignorance is an interesting feature of the TsM approach.
%}

The procedure for training a TsM is given by Algorithm~\ref{a:tsm}.
It initialises $ n $  (num. of monomials) teams of TAs (one for each variable $ \variables $ and its negation) 
with $ \states $ states per action (Line~\ref{a:tsm:3}).
Then, for each training example (Line~\ref{a:tsm:5}),
it loops for every TA team and apply the respective feedback type in order to update the recognised pattern.
The value computed in Line~\ref{a:tsm:7} is used to guide the 
randomised selection for the type of feedback to give to the TAs.
The function $ {\sf clip} $ is used to bound the sum between the interval $ [-\target,\target] $.
%Given a   dataset $\dataset$, features $\variables$,  $\clauses$,  
%num. of states $\states$, margin $\target$, and specificity $\precision$,
Finally, the algorithm returns	the functions $C^+,C^-$.	
We denote with $\Mmc(\dataset,\variables,\clauses,\states,\target,\precision)$  the  
formula
\begin{equation}\label{eq:polynomial}
%\[
\sum^{n/2}_{j=1} C^-(j) \geq \sum^{n/2}_{j=1} C^+(j)
%\]
\end{equation}
built from the functions  $C^+,C^-$  returned by 
%a TsM 
Algorithm~\ref{a:tsm} with \dataset, \variables, \clauses, \states, \target, and \precision as input. 
We may omit $(\dataset,\variables,\clauses,\states,\target,\precision)$ if this is clear from the context
and call a expression \Mmc in this format a \emph{TsM formula}. 
Given an interpretation $\Imc$, we write 
$\Mmc(\Imc)=0$---meaning that the classification of \Imc is \emph{negative}---if 
the result of replacing each variable $x$ in the formula \Mmc by
$\Imc(x)$ results in an expression where Equation~\ref{eq:polynomial} holds, otherwise
$\Mmc(\Imc)=1$, that is, the classification of \Imc is \emph{positive}.
We %may also 
say that \Imc is 
a positive (resp. negative) example for \Mmc 
if $\Mmc(\Imc)=1$ (resp. $\Mmc(\Imc)=0$).
%, in other words, \Imc is a \emph{negative example/input} for \Mmc.
%We then say that \Imc is a \emph{negative example/input} for \Mmc).
%Following the already mentioned majority vote principle, the classification is \emph{true}
 %if Equation~\ref{eq:polynomial} holds
 %and \emph{false} otherwise. 
%We may also write $ \Mmc(\encoding(\Imc)) $ instead of $ \Mmc(\Imc) $\nb{To check at the end if it is the case}.
%We denote by...
%function \geq 0
%that receives an interpretation \Imc as input returns $1$ iff 
\begin{example}\label{ex:1}\upshape
%Just to illustrate the notation, 
Assume the \TsM formula \Mmc is \[ x_1 \overline{x_2} + \overline{x_1} x_2 \geq x_1 x_2 + \overline{x_1}\ \overline{x_2}.\]
Then, e.g., 
$C^-(1)=x_1 \overline{x_2}$,   and 
$C^+(2)=\overline{x_1} \ \overline{x_2}$. 
If $ \Imc(x_1)=1 $ and $ \Imc(x_2)=0 $, after substituting the values in the formula, we have
$ 1 + 0 \geq 0 + 0 $. So, the TsM classifies $ \Imc $ as $0$. In symbols, 
$\Mmc(\Imc)=0$.
%a positive example.
\end{example}

Each $ C^k(j) $ is trained by receiving two  feedback types given an input $ \Imc $ with a classification label. % $ c \in \{0,1\} $.
%\begin{itemize}
%\item Feedback I: if  $ C^k(j) $ finds its %pattern in \Imc and $ k=c $ then,
%for every  $ x$ with $\Imc(x)=1$, the TA in $ %C^k(j) $ that votes for $ x $ receives a positive feedback with some probability $ P $.
%Every other TA in   $ C^k(j) $ voting for $ x$ with $\Imc(x)=0 $ receives a negative feedback with that probability $ P $. 
%\item Feedback II: if $ C^k(j) =1  $ replacing $ %\vec{x} $ with values in \Imc
%and $  i\neq c$, then   every TA in   $ C^k(j) $ %voting for $ x$ with $\Imc(x)=0$ receives a %positive feedback.
%\end{itemize}
In short, Feedback I is given to the TAs which vote for a classification matching with the label $y$ of the input $\Imc$. It aims at increasing the number of clauses that correctly evaluates 
a positive input to true.
On the other hand,  Feedback II is given to the TAs which \emph{do not} vote for a classification matching with the label $y$ of the input $\Imc$.
 %o 
It aims at combating false positive output by 
increasing the discrimination power of the clauses.
%specialises the pattern recognised by a specific set of TAs. 
Granmo et al.  proves theoretical guarantees
regarding these feedbacks~\cite{granmo2018tsetlin}. %(see appendix %\cite{granmo2018tsetlin}
%for more details).  
%Multiclass TsMs
To perform multiclass classification on TsMs, one  defines a TsM for each target class and decides the class by taking the TsM which classifies positively with the highest difference between positive and negative monomials.

\usetikzlibrary{arrows,automata}
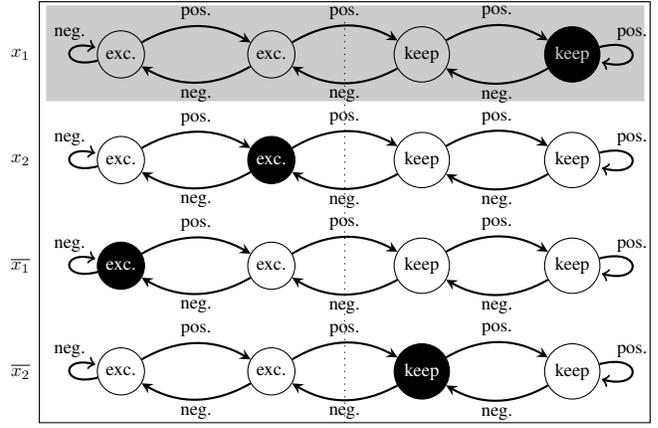
\begin{figure}%[H]
\begin{tikzpicture} [scale=0.7, every node/.style={scale=0.7},node distance = 2cm, on grid, auto]
	
	\def\ex{exc.}
	\def\keep{keep}
	\def\vardist{38pt}
	\node (q0) [state] at(0,0) {\ex};
	\node (q1) [state, right = of q0] {\ex};
	\node (q2) [state, right = of q1] {\keep};
	\node (q3) [state, right = of q2,fill=black,text=white] {\keep};
	%\node (q4) [state, right = of q3] {$s_4$};
	%\node (q5) [state,  right = of q4] {$s_5$};
	
	\def\pos{pos.}
	\def\neg{neg.}
	\path [-stealth, thick]
	(q0) edge [loop left] node[above,yshift=4pt]{\neg}()
	(q0) edge[bend left] node{\pos}   (q1)
	(q1) edge[bend left] node{\pos}   (q2)
	(q2) edge[bend left] node{\pos}   (q3)
	(q3) edge[bend left] node{\neg}   (q2)
	(q2) edge[bend left] node{\neg}  (q1)
	(q1) edge[bend left] node{\neg}  (q0)
	(q3) edge [loop right]  node[above,yshift=4pt] {\pos}()
	;
	
	\node [left = \vardist of q0] {$ x_1 $};
	
	\node (q0) [state, below = 40pt of q0] {\ex};
	\node (q1) [state, right = of q0,fill=black,text=white] {\ex};
	\node (q2) [state, right = of q1] {\keep};
	\node (q3) [state, right = of q2] {\keep};
	%\node (q4) [state, right = of q3] {$s_4$};
	%\node (q5) [state,  right = of q4] {$s_5$};
	
	\path [-stealth, thick]
	(q0) edge [loop left] node[above,yshift=4pt]{\neg}()
	(q0) edge[bend left] node{\pos}   (q1)
	(q1) edge[bend left] node{\pos}   (q2)
	(q2) edge[bend left] node{\pos}   (q3)
	(q3) edge[bend left] node{\neg}   (q2)
	(q2) edge[bend left] node{\neg}  (q1)
	(q1) edge[bend left] node{\neg}  (q0)
	(q3) edge [loop right]  node[above,yshift=4pt] {\pos}()
	;
	
	\node [left = \vardist of q0] {$ x_2 $};
	
	\node (q0) [state, below = 40pt of q0,fill=black,text=white] {\ex};
	\node (q1) [state, right = of q0] {\ex};
	\node (q2) [state, right = of q1] {\keep};
	\node (q3) [state, right = of q2] {\keep};
	%\node (q4) [state, right = of q3] {$s_4$};
	%\node (q5) [state,  right = of q4] {$s_5$};
	
	\path [-stealth, thick]
	(q0) edge [loop left] node[above,yshift=4pt]{\neg}()
	(q0) edge[bend left] node{\pos}   (q1)
	(q1) edge[bend left] node{\pos}   (q2)
	(q2) edge[bend left] node{\pos}   (q3)
	(q3) edge[bend left] node{\neg}   (q2)
	(q2) edge[bend left] node{\neg}  (q1)
	(q1) edge[bend left] node{\neg}  (q0)
	(q3) edge [loop right]  node[above,yshift=4pt] {\pos}()
	;
	
	\node [left = \vardist of q0] {$ \overline{x_1}  $};
	
	\node (q0) [state, below = 40pt of q0] {\ex};
	\node (q1) [state, right = of q0] {\ex};
	\node (q2) [state, right = of q1,fill=black,text=white] {\keep};
	\node (q3) [state, right = of q2] {\keep};
	%\node (q4) [state, right = of q3] {$s_4$};
	%\node (q5) [state,  right = of q4] {$s_5$};
	
	\path [-stealth, thick]
	(q0) edge [loop left] node[above,yshift=4pt]{\neg}()
	(q0) edge[bend left] node{\pos}   (q1)
	(q1) edge[bend left] node{\pos}   (q2)
	(q2) edge[bend left] node{\pos}   (q3)
	(q3) edge[bend left] node{\neg}   (q2)
	(q2) edge[bend left] node{\neg}  (q1)
	(q1) edge[bend left] node{\neg}  (q0)
	(q3) edge [loop right]  node[above,yshift=4pt] {\pos}()
	;
	
	\node [left = \vardist of q0] {$ \overline{x_2} $};
	
	\def\x{4.25}
	\draw[dotted] (\x,-7.)--(\x,.7);
	\draw[draw=black] (-1.55,1.) rectangle ++(11.6,-8);
	\draw[draw=black,fill=black,opacity=0.2] (-1.41,.92) rectangle ++(11.34,-1.8);
\end{tikzpicture}
\caption{4 TAs, each of them vote for the inclusion or exclusion of their associated variables. 
Black states indicate the final position in each TA after training.
	The recognised pattern is $ x_1 \overline{x_2}  $ as in function $ C^-(1) $ in Example~\ref{ex:1}.
	A single TA  voting for the inclusion of $ x_1 $ is depicted in the shaded area.
}
\label{f:tsautomaton}
\end{figure}

\begin{algorithm}[tb]
\begin{algorithmic}[1]
	\STATE {\bfseries Input:} Training data $\dataset$, features $\variables$, num. of monomials $\clauses$,  
	num. of states $\states$, margin $\target$, specificity $\precision$  
	%$  m \in\mathbb{N}^+  $ 
	\STATE {\bfseries Output:} the monomial functions $C^+,C^-$
	%~ A counterexample for the target $ \targetNN $ and $ h $ or `Not Found'.
	\STATE $\Amc^+,\Amc^-\leftarrow$ CreateTAs($\clauses,\states/2,\variables$) \label{a:tsm:3}
	\REPEAT
	\STATE $(\Imc,y) \leftarrow$ GetTrainingExample($\dataset$)\label{a:tsm:5}
	\STATE $C^+,C^- \leftarrow$ GetMonomials($\Amc^+,\Amc^-$)
	\STATE $\voting\leftarrow {\sf clip}(\sum^{n/2}_{j=1} C^-(j)-\sum^{n/2}_{j=1} C^+(j),-\target,\target)$  \label{a:tsm:7}
	%		- {\sf clip}(\sum^{n/2}_{j=1} C^0(j),-\target,\target)$
	%		\STATE $\voting^c\leftarrow{\sf clip}(\voting,-\target,\target)$
	\FOR{$j\leftarrow 1, \ldots, n/2$}
	\IF{$y=1$}
	\IF{rand() $\leq(\target - \voting)/(2 \target) $}
	\STATE $\Amc^+\leftarrow$ GenerateTypeIFeedback($\Imc,\Amc^+(j),\precision$)
	\STATE $\Amc^-\leftarrow$ GenerateTypeIIFeedback($\Imc,\Amc^-(j)$) %{\bfseries if}  %$\leq \frac{(\target - \voting^c) }{2 \target}$
	%\ENDIF
	%\IF{rand() $\leq(\target - \voting)/(2 \target) $}
	%{\bfseries if}  %$\leq \frac{\target - {\sf clip}(\voting,-\target,\target) }{2 \target}$
	\ENDIF
	\ELSE
	\IF{rand() $\leq(\target + \voting)/(2 \target) $}
	\STATE $\Amc^+\leftarrow$ GenerateTypeIIFeedback($\Imc,\Amc^+(j)$)
	\STATE $\Amc^-\leftarrow$ GenerateTypeIFeedback($\Imc,\Amc^-(j),\precision$) %{\bfseries if} rand() $\leq(\target + \voting)/(2 \target) $ %$\leq \frac{\target + {\sf clip}(\voting,-\target,\target) }{2 \target}$
	%\ENDIF
	%\IF{rand() $\leq(\target + \voting)/(2 \target) $}
	%{\bfseries if}  %$\leq \frac{\target + {\sf clip}(\voting,-\target,\target) }{2 \target}$
	\ENDIF
	\ENDIF
	\ENDFOR
	\UNTIL{StopCriteria($\dataset, C^+,C^-$)}
	\STATE {\bfseries return} $C^+,C^-$ \label{ln:lastcase}
	%~ \FOR{$ \csf \in h $}
	%~ %			\STATE $ P :=$ first $ m  $ elements in $ \times_{i=1}^{|V|} S_{i,c}  $ \alglinelabel{a:counterexamples:pos}
	%~ %			\COMMENT{$ S_{i,c} =  (\vsf_i)$ if $ \vsf_i \in B $, 
		%~ %				$ S_{i,c} =  (\neg \vsf_i)$ if $ \vsf_i = H $, $ S_{i,c} =  (\vsf_i,\neg \vsf_i)$ otherwise.}
	%~ \FOR{$ \Imc \in \prodpc(\csf,m)$} \alglinelabel{a:counterexamples:m}
	%~ %				\STATE $ \Imc \leftarrow \{ \vsf_i \mid \vsf_i $ occurs in $ S  \} $
	%~ \IF{$ \MQc{\Fmf}{\targetNN}(\Imc) = $ `yes'}
	%~ \STATE {\bfseries return} $ \Imc $ %\COMMENT{positive counterexample}
	%~ \ENDIF
	%~ \ENDFOR
	%~ \ENDFOR
	
	%~ \FOR{$ \vsf \in \Vsf $}
	%~ \STATE 	$ r := {\sf find\_nc}( \Vsf \setminus \{\vsf\},h, m )$ \alglinelabel{a:counterexamples:neg}
	%~ \IF{$ r \neq  $ `Not Found' }
	%~ \STATE {\bfseries return} $ r $  %\COMMENT{negative counterexample}
	%~ \ENDIF
	%~ \ENDFOR
	%~ \STATE {\bfseries return} ${\sf find\_nc}( \Vsf,h, m )$ \label{ln:lastcase}
\end{algorithmic}
\caption{Tsetlin Machine}
\label{a:tsm}
\end{algorithm}
%\todo{add about multiclass}
%\nb{weighted TM}
\section{Verifying Properties of TsMs} %Learning and Reasoning} 
\label{sec:lr}
\label{sec:lr}

We first present an exact encoding of a TsM into propositional logic
using sequential counters (Section~\ref{sec:satencoding}). 
%We then show how to use this encoding to define 
%adversarial 
%robustness, universal robustness, and equivalence
%properties of TsMs.
We then use this encoding to define the properties \emph{adversarial robustness}, % and
%\emph{universal adversarial robustness}, 
\emph{equivalence}, and \emph{similarity} of TsMs. 
%following the 
%as originally proposed 
%following the approach for  verifying %properties of 
%in a similar
%BNNs~\cite{DBLP:conf/aaai/NarodytskaKRSW18,DBLP:conf/nips/JiaR20}.

\subsection{SAT Encoding}
\label{sec:satencoding}

%\paragraph{
	
	%}
The encoding of TsMs into a propositional logic
uses the translation of sequential counters into a logic formula~\cite{10.1007/11564751_73}. %\nb{P: logic -> formula } 
Consider a cardinality constraint: 
$\sum^\ell_{i=1} l_i \geq K,$ where
$l_i \in \{0, 1\}$ %&are propositional  variables 
and $K\in\mathbb{N}$. % is a constant.
Then the cardinality constraint is 
encoded into   a formula as follows: %\nb{P: logic -> formula}
%true %sequential counter encoding
%, denoted
%$S_Q (\vec{l}, K)$ with $\vec{l} = (l_1 , \ldots, l_\ell )$, 
%if, and only if, the following propositional formula is true: %can be written as the following  formula of propositional logic:
%~ \begin{itemize} 
	%~ \item [] $(l_1 \leftrightarrow r_{1,1})\wedge (\neg r_{1,j}) \wedge$
	%~ %\item [] 
	%~ $(r_{i,1}\leftrightarrow (l_i\vee r_{i-1,1}))\wedge$
	%~ %\item [] 
	%~ $(r_{i,j} \leftrightarrow ((l_i\wedge r_{i-1,j-1})\vee r_{i-1,j}))\wedge r_{l,K}$
	%~ \end{itemize} 
\begin{equation}
	\begin{split}\label{eq:sc} 
		&(l_1 \leftrightarrow r_{1,1})\wedge (\neg r_{1,j}) \wedge
		%\item [] 
		(r_{i,1}\leftrightarrow (l_i\vee r_{i-1,1})) \wedge \\
		%\item [] 
		&(r_{i,j} \leftrightarrow ((l_i\wedge r_{i-1,j-1})\vee r_{i-1,j}))  %\wedge r_{\ell,K}
		%\wedge r_{l,K}
	\end{split} 
\end{equation}

where $i \in \{2,\ldots,\ell\}$ and $j \in \{2,\ldots,K\}$.
Variables of the form $ r_{i,j} $ are (fresh) auxiliary variables.
Intuitively, this encoding computes \emph{cumulative sums}. 
The variable %computation is performed in unary, i.e., %the Boolean variable 
$r_{\ell,K}$ is `true' iff
$\sum^\ell_{i=1} l_i$  
is greater than or equal to $K$. 
%
%propositional formula encod
For a cardinality constraint $c:=\sum^\ell_{i=1} l_i \geq K$, 
we denote by $ \encsqc(c) $ the sequential counter encoding of $c$ into a formula  (Eq.~\ref{eq:sc}). %\nb{P: logic -> formula}
%where subscript $ r $ explicitly denotes the symbol used for the auxiliary variable.

%For $ i\in{0,1} $, let $ C^i_x(j) $ be a multiplication of a subset of variables in $ \Vsf := \{ x_1,x_2, \cdots \} $.
%In order 
To express the TsM formula $\Mmc=\sum^{n/2}_{j=1} C^-(j) \geq \sum^{n/2}_{j=1} C^+(j) $ 
in propositional logic, 
%we make two preliminary steps:
%\begin{enumerate}
%	\item 
%	and $ v_{C^0(j) } $;
%	\item for $ l\in \{0,1\} $, we denote by $ r^l_{i,j} $ 
%	the auxiliary variables for the sequential counter encoding of $\encsqc_{r^l}(\sum^{n/2}_{j=1} C^l_x(j) \geq n/2)  $. \todo{this point can now be removed}
%\end{enumerate}
%\sum^{n/2}_{j=1} C^1(j) \geq \sum^{n/2}_{j=1} C^0(j)
we define  $ \encts(\Mmc) $ as:
\begin{align}
	&\encts(\Mmc):=\bigwedge_{k\in\{+,-\}} \bigwedge_{j=1}^{n/2} ( \newvar{k}{j} \leftrightarrow C^k(j)^\dagger) \label{encoding:1} \\
	& \land\bigwedge_{k\in\{+,-\}}  \encsqc(\sum^{n/2}_{j=1}  \newvar{k}{j} \geq n/2) \label{encoding:2}  \\
	& \land \bigwedge_{j=1}^{n/2} ((   r^-_{n/2,j} \rightarrow r^+_{n/2,j} )\leftrightarrow o_j) \wedge ((\bigwedge_{j=1}^{n/2} o_j)\leftrightarrow o) \label{encoding:3}
\end{align}
%for each $ 1 \leq j\leq n/2 $, we create the propositional variables 
where $ \newvar{+}{j}  $ 
and
$ \newvar{-}{j}  $ are fresh propositional variables, with  $ 1 \leq j\leq n/2 $,
and $r^k_{n/2,j}$ are the auxiliary variables in the encoding given by the
function $\encsqc(\cdot)$ for $k\in\{+,-\}$ (Eq.~\ref{eq:sc}).
We write $C^k(j)^\dagger$ for the result of converting a multiplication
of variables into a conjunction of variables and replacing 
each variable of the form  $\overline{x}$ by $\neg x$.
%adding a negation symbol in 
%front of variables of the form $\overline{x}$. 
%The conjunct in 
Eq.~\ref{encoding:3} expresses  that if $ j $  monomials in $ C^- $ are set to true,
then at least $ j $  monomials in $ C^+ $ are set to true. This information is stored in the variable $o$, which we call the \emph{output} variable of the encoding $ \encts(\Mmc) $ (we use additional auxiliary variables $o_j$ for practical purposes, in particular, to reduce the size of the formula when converting it into CNF, which is the format required by most SAT solvers).
%Given a polynomial \Mmc of the form of Equation~\ref{eq:polynomial}, we define $\encts(\Mmc)$ as the sequential counter encoding (Equation~\ref{eq:sc})
%of the constraint:
% \[\sum^{n/2}_{j=1} C^1(j) \geq \sum^{n/2}_{j=1} C^0(j)\]
%and 

\begin{example}\label{ex:2}\upshape
	Consider the TsM formula \Mmc in Example~\ref{ex:1}. Then, $\encts(\Mmc)$ is
	%\begin{gather}
	\begin{align*}
		& (v_{-,1}\leftrightarrow ( x_1\wedge \neg x_2))\wedge (v_{-,2}\leftrightarrow (\neg x_1\wedge x_2))
		& \wedge \\
		& (v_{+,1}\leftrightarrow ( x_1\wedge x_2))\wedge (v_{+,2}\leftrightarrow (\neg x_1\wedge \neg x_2)) & \wedge  \\ 
		& \bigwedge_{i\in\{+,-\}} ((v_{i,1} \leftrightarrow r^i_{1,1})\wedge (\neg r^i_{1,2})\wedge
		(r^i_{2,1}\leftrightarrow (v_{i,2}\vee r^i_{1,1}))& \wedge \\
		& (r^i_{2,2}\leftrightarrow ((v_{i,2}\wedge r^i_{1,1})\vee r^i_{1,2})))  \wedge  ((r^-_{2,1}\rightarrow r^+_{2,1})\leftrightarrow o_1) &\wedge \\
		&  ((r^-_{2,2}\rightarrow r^+_{2,2})\leftrightarrow o_2)\wedge ((o_1\wedge o_2)\leftrightarrow o). & %\bigwedge^2_{j=1}(r^0_{2,j}\rightarrow r^1_{2,j})\\
	\end{align*}
	%\end{gather}
\end{example}

We are now ready to state Theorem~\ref{thm:sat}, which essentially follows from the definition of $\encts(\Mmc)$ and the correctness 
of the sequential counter encoding.

\begin{restatable}{theorem}{theoremone}\label{thm:sat} %\nb{TO CHECK after the new definition of sequential counter}
	For a TsM formula \Mmc and an interpretation $\Imc$, 
	%\nb{ for \Mmc? Should we specify that the truth value of every variable in \Mmc is set by \Imc?}, 
	$\Mmc(\Imc)=1$ iff $\encts(\Mmc)_{[\vec{x}\rightarrow {\Imc}]}\wedge o$ is satisfiable.
\end{restatable}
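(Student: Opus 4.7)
The plan is to exploit the observation that, once every input variable $x_i$ in $\encts(\Mmc)$ is replaced by $\Imc(x_i)$, the remaining fresh variables in $\encts(\Mmc)_{[\vec{x} \to \Imc]}$ are all \emph{forced} to specific values by chains of biconditionals. Consequently, the conjunction with the literal $o$ is satisfiable precisely when the uniquely determined value of $o$ equals $1$, and the task reduces to showing that this forced value coincides with $\Mmc(\Imc) = 1$.

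I would proceed in three propagation steps, mirroring the three lines of the encoding. First, Eq.~\ref{encoding:1} forces each $\newvar{k}{j}$ to the truth value of the monomial $C^k(j)$ under $\Imc$, since $C^k(j)^\dagger$ is simply the propositional rewriting of the monomial and $\vec{x}$ is now constant. Second, I would invoke the standard correctness of the sequential counter encoding of~\cite{10.1007/11564751_73}: given the forced values of the $\newvar{k}{j}$, the auxiliary variables introduced by $\encsqc$ are forced so that $r^k_{n/2, j} = 1$ iff $\sum_{j'=1}^{n/2} \newvar{k}{j'} \geq j$. Writing $N_k$ for the number of monomials in group $k$ satisfied by $\Imc$, this reads $r^k_{n/2, j} = 1 \iff N_k \geq j$. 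Third, Eq.~\ref{encoding:3} forces $o_j$ to the truth value of $r^-_{n/2, j} \to r^+_{n/2, j}$ and $o$ to $\bigwedge_j o_j$, so $o$ is forced to $1$ iff the implication $N_- \geq j \Rightarrow N_+ \geq j$ holds for every $j \in \{1, \ldots, n/2\}$.

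The final step is a short case analysis on $j$ showing that this big conjunction collapses to an inequality between $N_+$ and $N_-$ which, by construction, matches Eq.~\ref{eq:polynomial} and thereby characterises $\Mmc(\Imc) = 1$. Combining the propagation chain with the forced-satisfiability observation above then yields the theorem. Both directions of the biconditional follow at once: if $\Mmc(\Imc) = 1$ then the forced value of $o$ is $1$ and the unique extension of $\Imc$ to the auxiliary variables satisfies $\encts(\Mmc)_{[\vec{x}\to\Imc]} \wedge o$; conversely, any satisfying assignment must respect the forced chain, hence $o = 1$, hence the inequality, hence $\Mmc(\Imc) = 1$.

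I expect the only nontrivial ingredient to be the correctness of $\encsqc$, which is a classical result that can be cited directly rather than re-derived. The main care required is index bookkeeping: verifying that the conjunction range $j \in \{1, \ldots, n/2\}$ is wide enough to cover every possible value of $N_-$ (it is, since $0 \leq N_k \leq n/2$), so that the inequality extracted from $\bigwedge_j o_j$ is exactly the one appearing in the TsM semantics and not a weaker variant.
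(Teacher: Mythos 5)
Your propagation argument is essentially the paper's own proof, repackaged: the paper also pushes $\Imc$ through Eq.~\ref{encoding:1} to fix the $\newvar{k}{j}$, through the sequential counters to fix the $r^k_{n/2,j}$, and through Eq.~\ref{encoding:3} to fix $o$. Your observation that every auxiliary variable is \emph{functionally determined} by chains of biconditionals (so that the extension of $\Imc$ is unique and both directions of the equivalence fall out of a single evaluation) is a cleaner packaging than the paper's, which builds the witness explicitly for ($\Rightarrow$) and argues separately for ($\Leftarrow$), but it is not a different route.

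The one step you defer as ``a short case analysis'' is, however, exactly the step that does not close as you state it. Writing $N_k$ for the number of satisfied monomials in group $k$, the conjunction $\bigwedge_{j=1}^{n/2}\bigl(N_-\geq j \Rightarrow N_+\geq j\bigr)$ collapses to the \emph{non-strict} inequality $N_+\geq N_-$, whereas $\Mmc(\Imc)=1$ is defined as the failure of Eq.~\ref{eq:polynomial}, i.e.\ the \emph{strict} inequality $N_+> N_-$. These two conditions disagree precisely when $N_+=N_-$: there the forced value of $o$ is $1$ but $\Mmc(\Imc)=0$, so the claimed ``match'' with Eq.~\ref{eq:polynomial} does not hold and the biconditional in the theorem fails at that boundary. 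To be fair, the paper's own ($\Leftarrow$) direction makes the same silent jump (``this can only be if $\Mmc(\Imc)=1$''), so this is a defect of the encoding/definition pair rather than of your strategy; but since your write-up explicitly rests on the asserted collapse, you must either carry out the case analysis and repair the tie case (e.g.\ by encoding $\sum\newvar{-}{j}\geq\sum\newvar{+}{j}$ faithfully, or by adopting a strict inequality in the definition of $\Mmc(\Imc)=0$), or at minimum flag that the equivalence as stated only holds when $N_+\neq N_-$.
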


\begin{proof}
	%	\nb{P: TODO check and polish}
	($ \Rightarrow $) $\Mmc(\Imc)=1$ implies that the number %$ n_1 $ 
	of satisfied monomials in $ C^- $ by \Imc is at least equal to the number %$ n_0 $ 
	of satisfied monomials in $ C^+ $ by \Imc.
	We show that we can find an interpretation $ \Jmc  $ that satisfies
	$ \encts(\Mmc)_{[\vec{x}\rightarrow {\Imc}]} \wedge o$. Initially we define $ \Jmc := \Imc $.
	According to Eq.~\ref{encoding:1} and for $k\in\{+,-\}$, we constraint \Jmc
	so that $ \Jmc(\newvar{k}{j})=1 $
	%	 the variable \newvar{i}{j} 
	iff $ C^k(j) $ is satisfied by \Imc. At this point, \Jmc satisfies 
	the part of the encoding $ \encts(\Mmc)_{[\vec{x}\rightarrow {\Imc}]} $ in Eq.~\ref{encoding:1}.
	Then, for  $k\in\{+,-\}$,
	we count how many variables $\newvar{k}{j}$
	%in the set $ \{\newvar{i}{l} \mid 1 \leq l \leq k \}$ 
	are set to true by \Jmc, where $ 1 \leq j \leq n/2 $. Let $ n_{k} $ be such number. 
	We set $ \Jmc( r^k_{n/2,j})=1 $ 
	%	the variables $ r^i_{k,j} $ 
	iff $ j\leq n_{k} $ (the remaining variables of the form $r^k_{m,j}$ are 
	set to true if the sum of the first $m$ variables  $\newvar{k}{j}$ is at least $j$). 
	In this way, we satisfy Eq.~\ref{encoding:2} that corresponds to the second part of the encoding.
	Finally, Eq.~\ref{encoding:3} is already satisfied by the just built \Jmc because, since $\Mmc(\Imc)=1$, for every $ 1\leq j \leq n/2 $, 
	the rule $  r^-_{n/2,j} \rightarrow r^+_{n/2,j} $ is satisfied, and so, adding $o$ as a conjunct in $\encts(\Mmc)_{[\vec{x}\rightarrow {\Imc}]}\wedge o$ yields a satiafiable formula.
	%, and for $ j > n/2 $ it is vacuously true.
	%
	%\noindent
	($ \Leftarrow $) Let \Jmc be an interpretation that satisfies $ \encts(\Mmc)_{[\vec{x}\rightarrow {\Imc}]} \wedge o$. 
	By Eq.~\ref{encoding:3},
	for all $ 1 \leq j \leq n/2 $, if %there are $j$ variables of the form  
	$r^-_{n/2,j}$ is true then $r^+_{n/2,j}$ is true.
	This means that, in Eq.~\ref{encoding:2}, if the sum of the %first
	$n/2$ variables  $\newvar{-}{j}$ is at least $j$
	then this is so for the sum of the  $n/2$ variables  $\newvar{+}{j}$.
	By Eq.~\ref{encoding:1} and the definition of   $C^k(j)^\dagger$, this can only be if 
	$ \Mmc(\Imc)=1 $.
	%	with  $ \Jmc \supseteq \Imc $. 
	%	\Imc corresponds to the interpretation where %
	%	the truth value of auxiliary variables introduced in the encoding such as \newvar{k}{j} and $ r^k_{j,m} $ is unknown. 
	%	But, \Imc does not change the truth value assigned by \Jmc of any monomial in \Mmc. 
	%	Since \Jmc satisfies the encoding and \Imc keeps the same truth value of monomials in \Mmc, %
	%	the number of satisfied monomials in $ C^1 $ will be at least equal the number 
	%	of satisfied monomials in $ C^0 $. Therefore, $ \Mmc(\Imc)=1 $.
\end{proof}
\subsection{Robustness}
\label{subsec:robustness}
%\nb{list techniques for generating adversarial inputs}

We use the encoding of TsMs   in Subsection~\ref{sec:satencoding} to 
define and show correctness of adversarial robustness for TsMs.
%investigate properties of Tsetlin classifiers.

%We first focus on the robustness property.
%For $ p \geq 1 $, we use $ ||\tau||_p $ to denote the $ p $-norm of  the vector $ \tau $.
%%Let $ n $ be the length of the vector $ \tau $. The $ p $-norm of $ \tau $ is defined by  $ ||\tau||_p  = (\sum_{i}^{n} |\tau_i|^p )^{1/p}$. 
%With $ p = \infty $, then $ ||\tau||_p  = \max_{i}^{n} |\tau_i|$

\begin{definition}[Adversarial   Robustness] \label{p1}
	Let $n$ be the dimension of the input of a TsM and let \Mmc be its  formula. 
	Such TsM is called \emph{$\epsilon$-robust} for an interpretation $\Imc$ 
 %such that $\Mmc(\Imc  )=0$ 
 if  there is no
	interpretation $\Jmc$ 
	such that 
	$ \hamming_n(\Imc,\Jmc) \leq \epsilon $ and
	%$i\in\{1,2\}$, 
	%and $\Mmc(v)=1$, 
	%such that 
	$\Mmc(\Imc )\neq \Mmc(\Jmc )$. %\footnote{$ \oplus  $ is the bitwise xor operator.}
\end{definition}

To check if a TsM is $\epsilon$-robust (Definition~\ref{p1}), we need 
to ensure that there are no $\epsilon$ bit flips of a given input vector 
$ \Imc $ such that the classification on $ \Imc $ of the TsM changes. 
We can explore the search space with the help of a SAT solver. %, that have been successful in finding solutions in huge search spaces.
To check for $\epsilon$-robustness, we call a SAT solver with the following formula 
$ \notrob(\Mmc,\Imc,\epsilon)  $ as input, where \Mmc is the TsM we would like to check robustness and
$ \Imc $ is an input for \Mmc. %\nb{O:changed} % such that $ \Mmc(\Imc) = \lab(\Imc) $ with $ \lab(\Imc)\in \{0,1\} $.
The formula is satisfied if the TsM is \emph{not robust}.
That is,
if we are able to find a combination of at most $\epsilon$
bit flips to apply to the input 
\Imc
%$ \encoding(\Imc) $ 
such that the classification of the TsM
changes.
%\Mmc is not $ \lab(\Imc) $.

We can check with a SAT solver if 
there is no assignment that satisfy  $ \notrob(\Mmc,\Imc,\epsilon)  $, which means that 
the property of the TsM \Mmc  being adversarially $\epsilon$-robust is satisfied.
In the following, we write $\Mmc_{\vec{x}}$ to make explicit that
the $n$ variables $\vec{x}$ in \Mmc 
(the ones used to build the functions $C^k$
in 
$\encts(\Mmc)$  (Eqs.~\ref{encoding:1}-\ref{encoding:3}))
are those $x_j$ in Eq.~\ref{rob:xor}.
%To encode the norm constraint $ ||\tau_i||_p \leq\epsilon $, we use the CNF conversion
%from linear constraints over integers \cite{DBLP:journals/constraints/TamuraTKB09}. 
%\footnote{This encoding should also satisfy the constraint that the perturbed input
	%	assigns  binary values to variables.}
%We denote by $ CNF $ the function that takes as input $ ||\tau_i||_p \leq\epsilon $
%and outputs a CNF formula that is satisfiable iff $ ||\tau_i||_p \leq\epsilon $ is true.
%
%\begin{example}
%	$ \max_{i} x_i $ is
%	
%	$ (\bigvee_i x\leq x_i)  \land (\bigwedge_i \neg (x \leq x_i-1)) $
%	
%	SAT equisatisfiable formula
%	
%	$ (\bigvee_i p(x,x_i))  \land (\bigwedge_i \neg (p(x,x_i-1)))  \land
%		\bigwedge_{0\leq v\leq1} (\neg p(x,v-1) \lor p(x,v))	
%	$
%	
%	
%\end{example}
%@article{DBLP:journals/constraints/TamuraTKB09,
	%	author    = {Naoyuki Tamura and
		%		Akiko Taga and
		%		Satoshi Kitagawa and
		%		Mutsunori Banbara},
	%	title     = {Compiling finite linear {CSP} into {SAT}},
	%	journal   = {Constraints An Int. J.},
	%	volume    = {14},
	%	number    = {2},
	%	pages     = {254--272},
	%	year      = {2009}
	%}
\begin{align}
	\notrob(\Mmc_{\vec{x}},\Imc,\epsilon ) := \;    \neg \encsqc(\sum_{j=1}^{n} l_j \geq \epsilon +1 ) \ \land 
	%\encts(\Mmc_{\vec{x}})
	\label{rob:1} \\
	\bigwedge_{j=1}^{n} ( x_j \leftrightarrow (( \Imc[j] \lor l_j ) \land (\neg \Imc[j] \lor \neg l_j ))) \ \land \label{rob:xor} \\
 \encts(\Mmc_{\vec{x}}) \land  (\Mmc_{\vec{x}}(\Imc)\leftrightarrow \neg o) 
	%\\
	%& % \neg \Mmc_{\vec{x}}(\Imc) \leftrightarrow 
	%  \label{rob:3}
\end{align}

%The variables $x_j$ in Eq.~\ref{rob:xor}
%are the same as the variables $x_j$ used to build the functions $C^i$
%in 
%$\encts(\Mmc)$  (Eq.~\ref{rob:3}). 
In Eq.~\ref{rob:1} we create new variables $ l_j $  with $ 1\leq j \leq n $
and we specify with a sequential counter encoding that at most $ \epsilon$ of such variables should have 
a `true' truth value. Semantically, an $ l_j $ is set to true if 
the truth value of $\Imc[i]$
%the bit of $ \encoding(\Imc) $ at position $ i $ 
should be flipped.
In Eq.~\ref{rob:xor}, we force the variables $ x_j $ with $ 1\leq j \leq n $,
that are used to define monomials in the TsM, to be the truth value of the flipped variable at position $ i $ such that 
$ x_j = \Imc[j] \oplus l_j $ (where $\oplus$ is the XOR operator).
That is, $ x_i $ is true if $ \Imc[i] $ is true and the bit should not be flipped  or if 
$ \Imc[i] $ is false and the value should be flipped.
In the other cases, the final value of the flipped variable is set to false. %This logic can be easily encoded with  a XOR formula. 
Finally, %in Eq.~\ref{rob:3},
we add the constraint that the output of the TsM %with the original input \Imc 
%differs from the output
%of the TsM with 
with the modified input (given by valuations of the variables $\vec{x}$) differs
from the label of the original input.
%is positive for some such modified input.
This means that the formula is \emph{satisfiable} if the TsM is
\emph{not $\epsilon$-robust} for the input \Imc.
%\nb{O:changed}

%The constraint in Formula~\ref{rob:1} 
%is encoded with the formula 
%$ \encsqc_t( \sum_{i=1}^{|\Vsf|} l_i \geq \epsilon +1 ) \land \neg t_{|\Vsf|,\epsilon +1} $ using the sequential counter encoding.

We show in Example~\ref{ex:3} the not robust check 
for the TsM in Example~\ref{ex:1} and an interpretation classified as negative.
%when the input vector is made of only $ 2 $ variables, $x_1$
%and $x_2$.
%Formula~\ref{rob:1} is actually encoded with the following formula:
%\[ 
%\encsqc_t(\sum_{i=1}^{|\Vsf|} l_i \geq p+1 ) \land \neg t_{|\Vsf|,p+1}
%\]
%meaning that at most $ p $ variables among $ l_i $ for $ 1 \leq i \leq |\Vsf| $ are set to true.
%The variable $ t_{|\Vsf|,p+1} $ is set to true by the encoding iff by looking at $ |\Vsf| $ variables,
%at least $ p+1 $ are set to true.

\begin{example} \label{ex:3}
	Let \Mmc be the TsM in Example~\ref{ex:1} and let $ \epsilon =1 $.
	%Let $ F $ be the formula in Example~\ref{ex:2}.
	We can check for the robustness of \Mmc with input the
	vector representation of an interpretation \Imc
	with $\Imc[1]=0$ and $\Imc[2]=0$
	%example
	%$ \encoding(\Imc)=(0,1) $ 
	(assume the dimension $n$ of the input of the TsM is $2 $) and $  \Mmc(\Imc)=0 $
	as follows:
	\begin{gather}
		\begin{align*}
			\neg & ((l_1\leftrightarrow t_{1,1}) \land \neg t_{1,2} \land (t_{2,1} \leftrightarrow (l_2 \lor t_{1,1})) &\land \\
			&(t_{2,2} \leftrightarrow ((l_2 \land t_{1,1}) \lor t_{1,2} ) )  &\land \\ 
			& 	(x_1 \leftrightarrow ((0\lor l_1) \land (1 \lor \neg l_1))) &\land \\
			& (x_2 \leftrightarrow ((0 \lor l_2) \land (1 \lor \neg l_2)))
			&\land \\
			& \encts(\Mmc_{\vec{x}}) \land  (0\leftrightarrow \neg o). 
		\end{align*}
	\end{gather}	
\end{example}

\begin{restatable}{theorem}{theoremtwo}\label{thm:robust}
	A TsM \Mmc is $ \epsilon$-robust for $ \Imc $ 
 %with 	$\Mmc(\Imc)=0$ 
 iff
	$ \notrob(\Mmc,\Imc,\epsilon) $ is not satisfiable.
\end{restatable}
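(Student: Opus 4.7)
The plan is to prove the contrapositive in both directions, namely that $\Mmc$ is not $\epsilon$-robust for $\Imc$ if and only if $\notrob(\Mmc,\Imc,\epsilon)$ is satisfiable. The formula was set up so that a satisfying assignment should encode precisely a counterexample $\Jmc$ with $\hamming_n(\Imc,\Jmc)\leq\epsilon$ and $\Mmc(\Jmc)\neq \Mmc(\Imc)$: the variables $l_j$ play the role of a bitmask indicating flipped positions, the $x_j$ carry the resulting adversarial input, Theorem~\ref{thm:sat} ties the output variable $o$ to $\Mmc(\Jmc)$, and the final conjunct forces the classifications to disagree.

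For the direction ``not robust $\Rightarrow$ satisfiable,'' I would start from a witness $\Jmc$ of non-robustness and build an interpretation $\Kmc$ as follows. Set $\Kmc(l_j):=\Imc[j]\oplus \Jmc[j]$; since $\hamming_n(\Imc,\Jmc)\leq\epsilon$, at most $\epsilon$ of these are true, and by the correctness of the sequential counter encoding applied to $\sum_j l_j\geq\epsilon+1$, the conjunct in Eq.~\ref{rob:1} (which is a negation) is satisfied. Set $\Kmc(x_j):=\Jmc[j]$, making the XOR equivalence in Eq.~\ref{rob:xor} hold by construction. For the $\encts(\Mmc_{\vec{x}})$ part, invoke the construction used in the $(\Rightarrow)$ direction of the proof of Theorem~\ref{thm:sat} to extend $\Kmc$ to the auxiliary variables $\newvar{k}{j}$, $r^k_{i,j}$, $o_j$, $o$ so that $\encts(\Mmc_{\vec{x}})_{[\vec{x}\rightarrow\Jmc]}$ is satisfied with $\Kmc(o)=\Mmc(\Jmc)$. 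Because $\Mmc(\Jmc)\neq\Mmc(\Imc)$, the constraint $\Mmc_{\vec{x}}(\Imc)\leftrightarrow\neg o$ is met.

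For the converse, assume $\Kmc$ satisfies $\notrob(\Mmc,\Imc,\epsilon)$ and define $\Jmc[j]:=\Kmc(x_j)$. The equivalence in Eq.~\ref{rob:xor} unpacks to $x_j\leftrightarrow(\Imc[j]\oplus l_j)$, so $\Imc$ and $\Jmc$ differ exactly on the positions where $\Kmc(l_j)=1$; combined with the negated sequential counter in Eq.~\ref{rob:1} and the correctness of $\encsqc$, this yields $\hamming_n(\Imc,\Jmc)\leq\epsilon$. Now $\encts(\Mmc_{\vec{x}})$ holds under $\Kmc$ with $\vec{x}$ valued as $\Jmc$, so by the $(\Leftarrow)$ direction of Theorem~\ref{thm:sat} the variable $o$ is true under $\Kmc$ iff $\Mmc(\Jmc)=1$. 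The final conjunct $\Mmc_{\vec{x}}(\Imc)\leftrightarrow\neg o$ then forces $\Mmc(\Imc)\neq\Mmc(\Jmc)$, so $\Jmc$ witnesses that $\Mmc$ is not $\epsilon$-robust for $\Imc$.

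The main obstacle is really only bookkeeping: showing that a satisfying assignment of a formula containing both the sequential counter encoding for the bit-flip budget and the full TsM encoding $\encts$ can be cleanly split into (i) an adversarial input and (ii) the auxiliary valuations dictated by the two encodings. Once one appeals to Theorem~\ref{thm:sat} for the TsM part and to the standard correctness of the sequential counter encoding for the budget part, there is no deeper difficulty; the argument is a direct translation between SAT witnesses and adversarial vectors.
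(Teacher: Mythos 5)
Your proposal is correct and follows essentially the same route as the paper's own (sketched) proof: both argue the contrapositive, use the correctness of the sequential counter encoding to equate the $l_j$-budget with the Hamming-distance bound, and invoke Theorem~\ref{thm:sat} to tie the output variable $o$ to the classification of the perturbed input. Your write-up is in fact more explicit than the paper's sketch about how the witness interpretation is assembled in each direction, but there is no difference in substance.
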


In a similar way, one can also check for a stronger property,
denoted universal adversarial  robustness. 
This property holds for a set of   inputs if a TsM
is robust to all adversarial perturbations (up to some threshold value) in a number of elements of this set.
More specifically, if $ S $ is the set of   inputs considered, we would like to check
if the TsM \Mmc  classifies differently only an $ \eta $-fraction of 
perturbed elements in $ S $.

\begin{definition}[Universal Adversarial  Robustness] 
Let $n$ be the dimension of the input of a TsM with formula \Mmc.
	Then, \Mmc is  $(\epsilon,\eta)$-robust for a set of 
	  classified inputs $S$ if it is 
   $\epsilon$-robust for $\eta|S|$ or more
   classified inputs in $S$.
   %\[ S = \{(\Imc_i,\Mmc(\Imc_i)) \mid 1\leq i \leq u \} \]
%	(where each $ e_i $ is of the form  $\encoding(\Imc)$)
%	if  it does not hold that the number of 
 %$(\Imc_i,\Mmc(\Imc_i))\in S$ such that
%	$ \Sigma_{(\Imc_i,\Mmc(\Imc_i)) \in S} \mathds{1}_{\Mmc(\Imc_i ) \neq \Mmc(\Imc_i') }
%	\geq \eta|S| $
%	where 
 %there is $\Imc'_i$ with $ \hamming_n(\Imc_i,\Imc'_i) \leq \epsilon$
 %and $\Mmc(\Imc_i ) \neq \Mmc(\Imc_i')$
 %is greater or equal to $\eta|S|$. 
 %, with $ 1\leq i \leq u $. 
\end{definition}

To check for this property, we check if a SAT solver
does not find any assignment for the formula:
\begin{align}
	&{\sf{UniRob}}(\Mmc,S,\epsilon,\eta) := \; \encsqc( \sum_{i=1}^{|S|} c_i \geq \lfloor\eta |S|\rfloor) \\
	& \bigwedge_{(\Imc_i,\Mmc(\Imc_i)) \in S} (\notrob(\Mmc,\Imc_i,\epsilon) \leftrightarrow \neg c_i)  %\label{unirob:1} 
 %\\
%	&  
\label{unirob:2}
\end{align}

In Eq.~9, each $ c_i $ is a new variable that is set to true iff the $\epsilon$-robustness check with a specific   example $ \Imc_i $ has not passed. 
Then, in Eq.~\ref{unirob:2}, we check whether 
the number of examples in which the non-robustness 
test failed %(in other words, robustness succeeded) 
passes a proportion of the set $S$
based on $\eta$.

\begin{restatable}{theorem}{theoremthree}    
\label{thm:unirobust}
	A TsM \Mmc is $ (\epsilon,\eta )$-robust for a set of   inputs $ S $ iff
	$ {\sf{UniRob}}(\Mmc,S,\epsilon,\eta) $ is  satisfiable.
\end{restatable}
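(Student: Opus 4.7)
My plan is to combine Theorem~\ref{thm:robust}, which characterises $\epsilon$-robustness of a single input in terms of unsatisfiability of $\notrob$, with the correctness of the sequential counter encoding already justified in the proof of Theorem~\ref{thm:sat}. The two directions of the biconditional then reduce to a counting argument over $S$, with each $\notrob(\Mmc,\Imc_i,\epsilon)$ contributing an independent subformula whose witness variables ($l_j$, $x_j$, and the auxiliaries of $\encsqc(\cdot)$ and $\encts(\cdot)$) can be taken to be fresh for each $i$.

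For the ($\Rightarrow$) direction, I would use Theorem~\ref{thm:robust} to partition $S$ into a robust subset $R$ and its complement, with $|R|\geq \lfloor\eta|S|\rfloor$ by hypothesis. For every $\Imc_i \in R$, Theorem~\ref{thm:robust} gives that $\notrob(\Mmc,\Imc_i,\epsilon)$ is unsatisfiable, so any extension of the assignment falsifies it and the biconditional $\notrob_i\leftrightarrow \neg c_i$ forces $c_i=1$; for every $\Imc_i \in S\setminus R$, Theorem~\ref{thm:robust} supplies a witness assignment making $\notrob_i$ true, so $c_i=0$ is consistent. Combining these independent local choices gives a global assignment satisfying the biconditional block, and the cardinality constraint $\sum c_i \geq \lfloor \eta|S|\rfloor$ is met with equality $|R|$; its correctness as a propositional formula follows from the already-established sequential counter semantics.

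For the ($\Leftarrow$) direction, a satisfying assignment $\sigma$ of ${\sf UniRob}$ must set at least $\lfloor\eta|S|\rfloor$ of the $c_i$'s to true, and by the biconditional each such $c_i=1$ corresponds to $\sigma\not\models\notrob(\Mmc,\Imc_i,\epsilon)$. The main obstacle is that this is \emph{a priori} weaker than unsatisfiability of $\notrob_i$, which is what Theorem~\ref{thm:robust} requires for $\epsilon$-robustness: the particular $l_j^i$-values chosen in $\sigma$ might simply fail to witness non-robustness even when some witness exists (e.g.\ the all-zero flip choice never satisfies the output-differs conjunct). I would address this by arguing that $\sigma$ can be chosen canonically --- selecting a satisfying witness of $\notrob_i$ whenever one exists --- so that on this canonical $\sigma$, $c_i=1$ forces $\notrob_i$ to have no witness at all, i.e.\ to be unsatisfiable; Theorem~\ref{thm:robust} then converts each such $c_i=1$ into $\epsilon$-robustness of $\Imc_i$, and the cardinality bound yields $(\epsilon,\eta)$-robustness. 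The hard part of a full write-up will be making this canonical-witness step rigorous while keeping ${\sf UniRob}$ a purely propositional formula.
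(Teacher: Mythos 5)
Your ($\Rightarrow$) direction is essentially the paper's argument (stated much more carefully): by Theorem~\ref{thm:robust}, $\epsilon$-robustness of $\Imc_i$ makes the block $\notrob(\Mmc,\Imc_i,\epsilon)$ unsatisfiable, the biconditional then forces $c_i=1$, the non-robust blocks can be handled independently with fresh auxiliaries, and the counter is met. The genuine problem is the one you yourself flag in the ($\Leftarrow$) direction, and your proposed repair does not close it. Satisfiability of ${\sf UniRob}$ quantifies existentially over \emph{all} assignments; you cannot restrict attention to a ``canonical'' $\sigma$ that witnesses $\notrob_i$ whenever a witness exists, because ``$\notrob(\Mmc,\Imc_i,\epsilon)$ is unsatisfiable'' is a coNP assertion about a formula, not a property checkable of any single assignment, and no additional propositional constraints can enforce your canonical choice. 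Concretely: the assignment that sets every flip variable $l^i_j$ to $0$ (and the auxiliaries consistently) makes $x_j=\Imc_i[j]$, so $\encts(\Mmc_{\vec{x}})$ forces $o$ to agree with $\Mmc(\Imc_i)$, the conjunct $\Mmc_{\vec{x}}(\Imc_i)\leftrightarrow\neg o$ fails, each $\notrob_i$ evaluates to false, and each $c_i$ is forced to $1$ --- regardless of whether $\Mmc$ is robust on any input. Hence ${\sf UniRob}$ is satisfiable for essentially every $\Mmc$ and $S$, and the ($\Leftarrow$) direction cannot be proved as stated.

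For comparison, the paper's own proof is two sentences asserting that ``each $c_i$ is set to true iff $\Mmc$ is not $\epsilon$-robust for $\Imc_i$,'' which silently identifies ``$\notrob_i$ evaluates to false under the chosen assignment'' with ``$\notrob_i$ is unsatisfiable'' (and moreover has the polarity of $c_i$ inverted relative to Definition~3 and Eq.~\ref{unirob:2}). So your diagnosis is sharper than the paper's treatment, but your plan is still missing the essential ingredient. A correct version needs either a quantifier alternation (a QBF of the form ``there exist $c_i$'s such that for all flip choices \dots''), or $|S|$ separate SAT calls to $\notrob(\Mmc,\Imc_i,\epsilon)$ whose unsatisfiability verdicts are counted outside the solver, or a restatement of the theorem about that two-level procedure rather than about satisfiability of a single propositional formula.
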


\subsection{Equivalence and Similarity}\label{sec:equi-sim}
Two TsMs 
are equivalent if they output the same class 
given the same input (Definition~\ref{def:equiv}).

\begin{definition}\label{def:equiv}
	Two TsMs $\Mmc_1$ and $\Mmc_2$ %\acp{TsM} $ \TsM^1, \TsM^2 $ 
	are equivalent
	if, for all interpretations \Imc (as an input vector),
	%we have 
	$ \Mmc_1(\Imc) = \Mmc_2(\Imc)  $.
	%	$ \Mmc^1_{[\vec{x}\rightarrow {\Imc}]} =  \Mmc^2_{[\vec{x}\rightarrow {\Imc}]} $.
	
\end{definition}

We can use the encoding $ \encts(\Mmc_i) $ for $ 1\leq i\leq2 $
presented in Section~\ref{sec:satencoding}
to search for an input \Imc that is classified differently by them.
We assume a deterministic 
procedure for generating the %auxiliary fresh 
 variables
in the encoding $ \encts(\Mmc_i) $, which guarantees
that, if the dimension of the input vectors of the TsMs with formulas $\Mmc_1$ and $\Mmc_2$ are the same,
then 
$ \encts(\Mmc_1) $ and $ \encts(\Mmc_2) $ are formulated
using the same input variables $\vec{x}$. 
%that is \Imc satisfies the formula (Theorem~\ref{thm:equiv}):

%~ \begin{align*}
	%~ %\neg (
	%~ \notequiv&(\Mmc_1,\Mmc_2):= \neg \encts(\Mmc_1) \leftrightarrow \neg \encts(\Mmc_2)
	%~ %) 
	%~ \end{align*}

%Theorem~\ref{thm:equiv}
%shows  
%Formula~\ref{formula:equiv}
%is satisfied and
%hence the models are not equivalent
%only when a counterexample is found,
%and we can know how it looks like.

\begin{restatable}{theorem}{theoremfour}   
\label{thm:equiv}
	Two TsMs $\Mmc_1$ and $\Mmc_2$ %\acp{TsM} $\TsM_1, \TsM_2 $
	are  equivalent iff $(\encts(\Mmc_1) \wedge o^1)\leftrightarrow (\encts(\Mmc_2)\wedge o^2)$
	holds, where $o^1,o^2$ are the output variables of $\encts(\Mmc_1)$ and $\encts(\Mmc_2)$, respectively.
 %(see explanation of Eq. 5). %\nb{changed here}
	%	$ \notequiv(\Mmc_1,\Mmc_2) $ is valid. 
\end{restatable}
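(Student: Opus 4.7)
The plan is to reduce equivalence to Theorem~\ref{thm:sat} applied separately to each TsM. The key structural observation is that $\encts(\Mmc_i)$ behaves as a functional circuit: every auxiliary variable (the clause indicators $\newvar{k}{j}$, the sequential-counter variables $r^k_{i,j}$, and the aggregator variables $o_j$ and $o^i$) is introduced by a biconditional whose right-hand side mentions only the input variables $\vec{x}$ and previously defined auxiliaries. Consequently, any assignment $\Imc$ to $\vec{x}$ extends uniquely to an assignment $\hat\Imc_i$ satisfying $\encts(\Mmc_i)$, and Theorem~\ref{thm:sat} yields $\hat\Imc_i(o^i) = \Mmc_i(\Imc)$. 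The deterministic procedure for generating variables mentioned in the text guarantees that the input symbols $\vec{x}$ are shared between $\encts(\Mmc_1)$ and $\encts(\Mmc_2)$, while the fresh-variable convention makes the two auxiliary sets disjoint.

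For the forward direction, I would assume equivalence and consider any $\Jmc$ satisfying both encodings simultaneously — such a $\Jmc$ exists because the two auxiliary sets are disjoint, so the canonical extensions $\hat\Imc_1,\hat\Imc_2$ combine into one interpretation. Setting $\Imc := \Jmc|_{\vec{x}}$ and applying Theorem~\ref{thm:sat} twice, I obtain $\Jmc(o^1) = \Mmc_1(\Imc) = \Mmc_2(\Imc) = \Jmc(o^2)$, which delivers the biconditional on all joint models of the two encodings.

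For the reverse direction, I would contrapose: suppose some $\Imc_0$ witnesses $\Mmc_1(\Imc_0) \neq \Mmc_2(\Imc_0)$. Extending $\Imc_0$ to a full interpretation $\Jmc$ that agrees with both canonical aux assignments, Theorem~\ref{thm:sat} gives $\Jmc \models \encts(\Mmc_i) \wedge o^i$ iff $\Mmc_i(\Imc_0) = 1$, so exactly one side of the displayed biconditional is satisfied under $\Jmc$ and the biconditional therefore fails.

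The main subtlety I anticipate is in the precise reading of ``holds'' in the theorem statement: taken as a plain propositional tautology over the union of all symbols, the biconditional can be spuriously falsified by an interpretation that violates one encoding's internal constraints while happening to satisfy the other's conjunct. The cleanest formalization, and the one a SAT solver actually checks, is that $\encts(\Mmc_1) \wedge \encts(\Mmc_2) \to (o^1 \leftrightarrow o^2)$ is valid, equivalently that the negation of the biconditional, conjoined with both encodings, is unsatisfiable. The functional-circuit property above shows that this reformulation is sound and matches the two directions of the argument, so that is the form I would target in the final write-up.
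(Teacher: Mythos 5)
Your argument follows essentially the same route as the paper's: both reduce the claim to Theorem~\ref{thm:sat}, using the fact that the two encodings share the input variables $\vec{x}$ while their auxiliary variables are disjoint, so that an input classified differently by $\Mmc_1$ and $\Mmc_2$ yields a falsifying interpretation of the biconditional and vice versa. The paper's proof is a three-sentence version of exactly this. What you add, and what the paper glosses over, is the observation that the displayed biconditional is \emph{not} literally a tautology even when the TsMs are equivalent: an interpretation that violates the internal constraints of $\encts(\Mmc_1)$ (e.g., sets some counter variable inconsistently) while satisfying $\encts(\Mmc_2)\wedge o^2$ falsifies it, so the paper's claim that the formula ``is falsified iff it is possible to find an \Imc which is classified differently'' is wrong in the only-if direction under the plain reading of ``holds.'' Your repair --- proving validity of $\encts(\Mmc_1)\wedge\encts(\Mmc_2)\rightarrow(o^1\leftrightarrow o^2)$, justified by the functional-circuit property that every assignment to $\vec{x}$ extends uniquely to a model of each encoding --- is the correct formalization and is what the SAT-based check actually implements. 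So: same approach, but your version is the one that actually closes the argument; the extra care is not pedantry but fixes a genuine imprecision in the stated theorem.
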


One can imagine that a complete equivalence between machine learning models is difficult to achieve. It is therefore also interesting to consider the 
case in which TsMs are not equivalent but 
similar.
We define similarity w.r.t. a set of  inputs 
(and small perturbations) as follows.
%can be defined in multiple ways.

\begin{definition}[Similarity and Universal Similarity]\label{def:sim}
	Let $n$ be the dimension of the input of TsMs with 
	formulas $\Mmc_1$ and $\Mmc_2$.
	Then, 
 $\Mmc_1$ and $\Mmc_2$ are $\epsilon$-similar for an interpretation \Imc if
 they   give the same classification result for \Imc
 and all interpretations \Jmc such that 
 $ \hamming_n(\Imc,\Jmc) \leq \epsilon $.
  Moreover,
 $\Mmc_1$ and $\Mmc_2$ are $(\epsilon,\eta)$-similar for a set $S$ of 	inputs if they are 
 $\epsilon$-similar for $\eta |S|$ or more inputs in $S$.
% both universally adversary $(\epsilon,\eta)$-robust for $S$.
	%\[ S = \{(\Imc_i,\Mmc(\Imc_i)) \mid 1\leq i \leq u \} \]
	%	(where each $ e_i $ is of the form  $\encoding(\Imc)$)
	%if  neither
	%$  \Sigma_{(\Imc_i,0) \in S} \mathds{1}_{ \Mmc_1(\Jmc_i)=1 }
	%\geq \eta|S| $
	%nor
	%$  \Sigma_{(\Imc_i,0) \in S} \mathds{1}_{ \Mmc_2(\Jmc_i)=1 }
%	\geq \eta|S| $	hold  
%	where $\Jmc_i$ is such that $ \hamming_n(\Imc_i,\Jmc_i) \leq \epsilon$, with $ 1\leq i \leq u $.
	%$S^{\ast}=\{\Imc\mid \hamming_n(\Imc,\Jmc') \leq \epsilon,\Imc'\in S\}$.
	% is such that $ \hamming_n(\Imc_i,\Imc'_i) \leq \epsilon$, with $ 1\leq i \leq u $.
\end{definition}

To check for similarity on an input \Imc, we check if a SAT solver
does not find any assignment for the formula:
\begin{align*}	\notsim(\Mmc_{1,\vec{x}},\Mmc_{2,\vec{x}},\Imc,\epsilon) := \;    \neg \encsqc(\sum_{j=1}^{n} l_j \geq \epsilon +1 ) \ \land 
	%\encts(\Mmc_{\vec{x}})
%	\label{rob:1} 
 \\
	\bigwedge_{j=1}^{n} ( x_j \leftrightarrow (( \Imc[j] \lor l_j ) \land (\neg \Imc[j] \lor \neg l_j ))) \ \land %\label{rob:xorsim} 
 \\
\encts(\Mmc_{1,\vec{x}}) \land \encts(\Mmc_{2,\vec{x}}) \land (o^1\leftrightarrow \neg o^2) % \label{unirob:one} %\\
%	&   
% \label{sim:two}
\end{align*}

To check for universal similarity, we check if a SAT solver
does not find any assignment for the formula:
\begin{align*}
	&{\sf{UniSim}}(\Mmc_{1,\vec{x}},\Mmc_{2,\vec{x}},S,\epsilon,\eta ) := \; \encsqc( \sum_{i=1}^{|S|} c_i \geq \lfloor\eta |S|\rfloor)\\
	&  \bigwedge_{\Imc_i \in S} (\notsim(\Mmc_{1,\vec{x}},\Mmc_{2,\vec{x}},\Imc_i,\epsilon) \leftrightarrow \neg c_i)  % \label{unirob:one} %\\
%	&   
 %\label{unirob:two}
\end{align*}
%where
%\begin{align*}
%	\notsim&(\Mmc_{1,\vec{x}},\Mmc_{2,\vec{x}},\Imc,\epsilon) := \; 
% &\\ 
%& 
%	\neg \encsqc(\sum_{j=1}^{n} l_j \geq \epsilon +1 )  
%	&\land \\ %\label{rob:1}\\
%	&  \bigwedge_{j=1}^{n} ( x_j \leftrightarrow (( \Imc[j] \lor l_j ) \land (\neg \Imc[j] \lor \neg l_j ))) %\label{rob:xor} 
%	&\land \\
%	&  \neg \encts(\Mmc_{1,\vec{x}}) \leftrightarrow \encts(\Mmc_{2,\vec{x}}). &  %\label{rob:3}
%\end{align*}
The intuition for %the formulas
${\sf{UniSim}}(\Mmc_1,\Mmc_2,S,\epsilon,\eta )$
%and
%$	%\notsim(\Mmc_{1,\vec{x}},\Mmc_{2,\vec{x}},\Imc,\epsilon)$
is similar to the intuition for 
$	{\sf{UniRob}}(\Mmc,S,\epsilon,\eta)$
%and
%$	\notrob(\Mmc_{\vec{x}},\Imc,\epsilon)$
except that here we check for two TsMs (with the same dimension).

\begin{restatable}{theorem}{theoremfive}  
\label{thm:unirobust}
	 TsMs $\Mmc_1$ and $\Mmc_2$ are 
 % \begin{itemize}
      %\item 
      (1) $ \epsilon$-similar for an
      interpretation \Imc iff
	$ {\sf{NotSim}}(\Mmc_1,\Mmc_2,\Imc,\epsilon) $ is unsatisfiable; and (2)
 %\item 
 $ (\epsilon,\eta )$-similar for a set of   inputs $ S $ iff
	$ {\sf{UniSim}}(\Mmc_1,\Mmc_2,S,\epsilon,\eta) $ is satisfiable.
 % \end{itemize}
\end{restatable}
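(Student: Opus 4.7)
The plan is to prove both parts by adapting the arguments already used for Theorem~\ref{thm:robust} (adversarial robustness) and for Theorem~\ref{thm:unirobust} (universal adversarial robustness), the only conceptual change being that instead of comparing the output of a single TsM against the classification label of the original input $\Imc$, we compare the output variables $o^1$ and $o^2$ of the encodings $\encts(\Mmc_1)$ and $\encts(\Mmc_2)$ evaluated on a shared perturbed input.

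For part (1), I would prove both directions by contrapositive. Suppose first that $\notsim(\Mmc_1,\Mmc_2,\Imc,\epsilon)$ has a satisfying interpretation $\Jmc$. The cardinality subformula $\neg\encsqc(\sum_{j=1}^{n}l_j\geq\epsilon+1)$ forces $\Jmc$ to assign at most $\epsilon$ of the $l_j$ variables to true, and the XOR-style binding $x_j\leftrightarrow((\Imc[j]\lor l_j)\land(\neg\Imc[j]\lor\neg l_j))$ determines a concrete perturbed input $\Imc'$ with $\Imc'[j]:=\Jmc(x_j)$ satisfying $\hamming_n(\Imc,\Imc')\leq\epsilon$. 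Applying Theorem~\ref{thm:sat} to each of $\encts(\Mmc_{1,\vec{x}})$ and $\encts(\Mmc_{2,\vec{x}})$, the satisfying assignment $\Jmc$ gives $\Jmc(o^i)=\Mmc_i(\Imc')$ for $i\in\{1,2\}$; the conjunct $o^1\leftrightarrow\neg o^2$ then forces $\Mmc_1(\Imc')\neq\Mmc_2(\Imc')$, contradicting $\epsilon$-similarity. Conversely, if $\Mmc_1,\Mmc_2$ fail to be $\epsilon$-similar for $\Imc$, I would exhibit a satisfying assignment: pick an $\Imc'$ within Hamming distance $\epsilon$ on which the two TsMs disagree, set $l_j:=\Imc[j]\oplus\Imc'[j]$, $x_j:=\Imc'[j]$, and extend to the auxiliary sequential-counter and monomial variables using the construction from the forward direction of the proof of Theorem~\ref{thm:sat} applied independently to each encoding, so that $o^i$ carries the truth value $\Mmc_i(\Imc')$; the biconditional $o^1\leftrightarrow\neg o^2$ holds by assumption.

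Part (2) follows the same accounting as the proof of Theorem~\ref{thm:unirobust}. Under the biconditional $\notsim(\Mmc_{1,\vec{x}},\Mmc_{2,\vec{x}},\Imc_i,\epsilon)\leftrightarrow\neg c_i$, in any satisfying assignment $c_i$ is true exactly when the corresponding copy of $\notsim$ is falsified, which by part~(1) is exactly when $\Mmc_1,\Mmc_2$ are $\epsilon$-similar for $\Imc_i$. Combined with the sequential-counter constraint $\encsqc(\sum_i c_i\geq\lfloor\eta|S|\rfloor)$, satisfiability of ${\sf UniSim}$ is equivalent to the existence of at least $\lfloor\eta|S|\rfloor$ inputs of $S$ for which $\Mmc_1,\Mmc_2$ are $\epsilon$-similar, which is precisely the definition of $(\epsilon,\eta)$-similarity. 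Both directions are then a routine combination of part~(1) with the correctness of $\encsqc$.

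The main obstacle I foresee is a careful treatment of variable scoping in part~(2): each occurrence of $\notsim(\cdot,\Imc_i,\epsilon)$ in the big conjunction must be instantiated with its own fresh copies of the $l_j$ variables and of the auxiliary variables introduced by $\encts$ and by $\encsqc$, so that the different subformulas can be independently satisfied or falsified by independent perturbations while the shared input variables $\vec{x}$ play no role (they are not existentially constrained by $\Imc$ in ${\sf UniSim}$, since each subformula introduces its own local binding of $\vec{x}$ to the corresponding $\Imc_i$). Once this bookkeeping is made explicit, the biconditional with $c_i$ cleanly expresses ``$\notsim$ is satisfiable for $\Imc_i$'', and the rest of the argument reduces to the correctness of the sequential counter together with Theorem~\ref{thm:sat}.
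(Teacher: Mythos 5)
Your proposal is correct and takes essentially the same route as the paper: the paper's own proof of this theorem is a one-line remark that it follows by the same arguments as the robustness and universal-robustness theorems, and your write-up is a faithful, more detailed expansion of exactly that reduction (part (1) mirroring the proof of Theorem~\ref{thm:robust} via Theorem~\ref{thm:sat}, part (2) mirroring the counting argument with the $c_i$ variables and $\encsqc$). The only caveat is that your claim that the biconditional with $c_i$ expresses satisfiability of the local $\notsim$ copy inherits the same leap the paper itself makes in its universal-robustness argument, so it does not diverge from the paper's reasoning.
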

%\begin{proof}
%	We use the
%	following claim, which
%	can be proved 
%	with an argument similar
%	to the one for Theorem~\ref{thm:robust},
%	except that here we check if the 
%	classification of the TsMs change when 
%	there is a perturbation (bounded by $\epsilon$) in the example.
%\begin{claim}\label{cl:aux}
%Let \Imc be a negative  input.
%There is no
%interpretation $\Jmc$ 
%such that 
%$ \hamming_n(\Imc,\Jmc) \leq \epsilon $ and
%$i\in\{1,2\}$, 
%and $\Mmc(v)=1$, 
%such that 
%$\Mmc(\Imc  ) =1$ or $ \Mmc(\Jmc )=1$ iff $	\notsim(\Mmc_{1,\vec{x}},\Mmc_{2,\vec{x}},\Imc,\epsilon)$ is unsatisfiable.
%\end{claim}	
%	By Claim~\ref{cl:aux}, %TTheorem~\ref{thm:robust},
%each $ c_i $ in Eq.~\ref{unirob:1}
%is set to true iff the classification of the TsMs $\Mmc_1$ and $\Mmc_2$ are not the same  for an interpretation \Jmc
%such that 
%$ \hamming_n(\Imc_i,\Jmc) \leq \epsilon $, where
% $ \Imc_i $ is the given example.
%Then, the fomula is satisfied iff 
%the number of examples \Imc that do not pass the 
%$	%\notsim(\Mmc_{1,\vec{x}},\Mmc_{2,\vec{x}},\Imc,\epsilon)$ test
%is above  $ \lfloor\eta |S|\rfloor $. 
%	By Theorem~\ref{thm:robust},
%	each $ c_i $ in Eq.~\ref{unirob:1}
%	is set to true iff the TsM \Mmc is not 
%	$ \epsilon $-robust for the given example $ \Imc_i $.
%	Therefore, Eq.~\ref{unirob:2} is satisfied iff 
%	the number of examples that do not pass the $ \epsilon $-robustness condition
%	is above the  threshold $ \lfloor\eta |S|\rfloor $. 
%\end{proof}

\section{Experiments and Results}\label{sec:exp}
%In this section, 
We present the results of our experiments to verify properties of TsMs on the classical MNIST and IMDB datasets. 
In particular, we employ
the 
formulas presented in Section~\ref{sec:lr} for checking adversarial robustness, equivalence, and similarity for image classification and sentiment analysis. We also perform some tests to compare TsMs and BNNs.
%(Subsection~\ref{subsec:adv-rob}). 
%We run our tests on the classical %MNIST~\cite{mnist}  dataset.
%In Subsection~\ref{sec:equi-sim}, we 
%check adversarialrobustness, equivalence, and similarity %(Subsection~\ref{subsec:adv-rob}). 
%also investigate properties of TsMs for sentiment
%analysis using the
%  IMDB~\cite{DBLP:conf/acl/MaasDPHNP11}  dataset. 
%In Subsection~\ref{sec:tsmbnn}, 
%For image classification, 
%we compare accuracy of TsMs with BNNs
%, using the same hyperparameters described 
%in \cite{DBLP:conf/nips/JiaR20} 
%Furthermore, we compare the accuracy of  TsMs %with BNNs 
%on the MNIST-c dataset \cite{mnistc}, 
%which is a corrupted version of the MNIST dataset consisting of $15$ corruptions 
%such as impulse noise, motion blur, scale, rotate, fog, stripe, zigzag, canny edges, etc. 
%MNIST-c has been used as a benchmark dataset for testing the robustness of models. 
%In addition to testing accuracy, we argue in Subsection~\ref{sec:tsmbnn} that our robustness results on TsMs on MNIST are comparable with those obtained by Narodytska et al. for BNNs~\cite{DBLP:conf/aaai/NarodytskaKRSW18}.
To perform  our experiments, we wrote the code in Python~3.8 and we used 
 the Glucose SAT solver~\cite{simon}, 
 which is the same SAT solver employed by Narodytska et al. in her mentioned work\footnote{We attempted to use the SAT solver by Jia et al. but we got out-of-memory errors that ultimately prevented us from using the SAT solver presented in their 
 work~\cite{DBLP:conf/nips/JiaR20}.}. 
%Similarly, text is binarized using bag-of-words.
%\paragraph{Datasets} 
We now describe the datasets %and the configuration of the TsMs 
used for training  in more detail. 
\begin{itemize}
	\item The \emph{\textbf{MNIST}} dataset consists of $60,000$ gray-scale 
	$28\times 28$ images for training for the task of hand written single 
	digits recognition and $10,000$ for testing. 
Images are binarized using an adaptive 
Gaussian thresholding procedure as proposed in \cite{granmo2019convtsetlin}. 
%This binarization results in images with merely %$1$ bit per pixel per channel, 
%considerably reducing the memory overhead.
 
 %Here, we use $N=2,000$,  $T=50$ and $s=10$.
	%	\item \emph{\textbf{CIFAR-10}} dataset consist of $50,000$ $32\times32$ color images of objects divided into $10$ categories. Another $10,000$ images are provided for testing. Here, we employ of $30,000$ clauses with $T=25,000$ and $s=10.0$.
	\item The \emph{\textbf{IMDB}} dataset is a sentiment classification dataset 
	consisting of  $50,000$ movie reviews, where $25,000$ of the samples are used as training data and the other
half as the testing set. The text is binarized using bags-of-words including   the $5,000$ most
frequent words.
% $5,331$ positive and negative movie reviews each in form of sentences/snippets. 
%	For this task, we use $7,500$ monomials with $T=5,000$ and $s=2.0$.
\end{itemize}
%
%
%We also test the accuracy of TsMs on the MR dataset by introducing corruptions in the text. 
%T
%\paragraph{Hardware} 
We run all experiments on  
%NVIDIA DGX-2 servers that contain $16$ NVIDIA Tesla $V100$ TensorCore GPUs.
an AMD Ryzen 7 5800X CPU at 3.80GHz with 8 logical
cores, Nvidia GeForce GTX 1070, and 32GB RAM.
The code is available at \url{https://github.com/bimalb58/Logical-Tsetlin-Machine-Robustness}.

%~ \begin{table}[h]
	%~ \caption{Robustness comparison on MNIST-c}
	%~ \label{tab:d}
	%~ \centering
	%~ \begin{tabular}{ccccccc}
		%~ \toprule
		%~ MNIST-c & $p=0.25$ & $p=0.5$ & $p=0.75$ \\
		%~ \midrule
		%~ TsM & 92.5   & 88.1 &  81.2   \\
		%~ BNN & 90.1   &  83.5 &  77.2    \\
		%~ \bottomrule
		%~ \end{tabular}
	%~ \end{table}

%\todo{Ana: I am still changing this section and there may be some reordering/reorganization of the results}

\subsection{Training TsMs}
\label{subsec:training} 
\begin{table}[t]
 \caption{TsMs models trained on MNIST. }
% with different hyperparameters.
	\label{tab:exeprimentMNISTmodels}
	\begin{center}
		\begin{tabular}{lllllll}
			\toprule
			& \textbf{N} & \textbf{T} & \textbf{s} & \textbf{train acc} & \textbf{test acc} &\textbf{train time}\\ \midrule
			1 & 500       & 25         & 10         & 99.20  \%                 & 97.41 \%         & 1880.45s         \\
			2 & 1000       & 25         & 10         & 99.94 \%                  & 98.22 \%             & 3447.27s     \\
%			3 & 1000       & 50         & 10         & 99.14 \%                  & 97.28 \%                 \\
			3 & 2000       & 50         & 10         & 99.97 \%                  & 98.25 \%     & 6915.48s          %  \\
%			5 & 4000       & 100        & 10         & 99.93 \%                  & 97.86 \%             %    \\ 
		\end{tabular}
	\end{center}
\end{table}
We train three  multiclass TsMs 
%classification models 
on the MNIST dataset varying the number of monomials $N$. The size of monomials affects the complexity and size of the model (and, therefore, of the formula used to check for robustness).
%To be able to perform the robustness experiments described in the next section, we need to
%scale down the TsM models. 
%SAT solver had shown to take a long time (even up to several
%hours on one test instance) on models with a high amount of monomials involved. 
We set
the limit of the maximum size of monomials to $2000$.
%, so the models presented here do not
%represent the state-of-the-art results achieved by TsMs. 
Table~\ref{tab:exeprimentMNISTmodels} presents the results of the accuracy of the training, accuracy of testing, and training
time for TsMs trained on the MNIST dataset. 
%The training time of the
%model increases with th monomials. 
The test accuracy
varies approximately less than 1\% for all the models for image classification. 
%Each of the models was trained over 400
%epochs. 
%Figure 4.2 shows the relation between accuracy %and epoch for model 3 from Table
%4.1. 

Table~\ref{tab:exeprimentIMDBmodels} contains the results of the  binary classification TsM models for sentiment analysis on the IMDB dataset. For the purpose of equivalence and similarity verification,
presented later, two of the models are trained using the same
hyperparameters. The first model is trained using a different value for the specificity  $s$. This hyperparameter sets the probability of TsM to memorize a literal.
The higher the value of $s$, the more literals are included in each monomial of the model.
The last column of the table presents the average number of literals per monomial. The
model having fewer literals reports higher accuracy on the testing data. The choice of the
parameters for training the models is based on a recent paper \cite{DBLP:conf/ijcai/Yadav0GG22} which
presents a correlation between the $s$ hyperparameter and model robustness (not tested precisely with a SAT solver, as we do in this work). 
%The authors show that
%the small value of the s parameter forces negated reasoning (i.e. monomials consisting of
%negated literals) and improves the model by removing spurious correlations.
%TsMs models allow obtaining human-readable interpretations. model 1 from the Table 4.2
%produced monomials such as
%
%The test and train accuracy keeps increasing through the iterations, sometimes affected by
%random fluctuations. 
%TsMs are less prone to overfitting than DNN, as the training algorithm
%does not rely on backpropagation and minimizing an error function. 
%The precision of the
%test set continues to increase even when the %accuracy of training data approaches 99.9% [12].
All models in Tables~\ref{tab:exeprimentMNISTmodels} and ~\ref{tab:exeprimentIMDBmodels} were trained with $400$ iterations of Algorithm~\ref{a:tsm}.

\begin{table}[t]
\caption{TsMs models trained on IMDB. Models 2 and 3 have the same hyperparameters and  were used for  verifying equivalence and similarity (Table~\ref{tab:sim}).}
	\label{tab:exeprimentIMDBmodels}
	\begin{center}
		\begin{tabular}{lllllll}
			\toprule
			& \textbf{N} & \textbf{T} &\textbf{s} & \textbf{train acc} & \textbf{test acc} & \textbf{avg. lit}\\ \midrule
			1    & 1000 &   1280   & 20         & 78.52  \%                 & 76.65 \%     &     225        \\
			2    & 1000 &   1280    & 2        & 83.91 \%                  & 82.53 \%           &   173    \\
   3    & 1000 &    1280   & 2        & 84.20 \%                  & 82.53 \%           &  175     \\
%			3 & 1000       & 50         & 10         & 99.14 \%                  & 97.28 \%                 \\
%			5 & 4000       & 100        & 10         & 99.93 \%                  & 97.86 \%                 \\ 
		\end{tabular}
	\end{center}
\end{table}
\subsection{Adversarial Robustness}
\label{subsec:adv-rob}

\begin{figure}
    \centering
    \includegraphics[width=\linewidth]{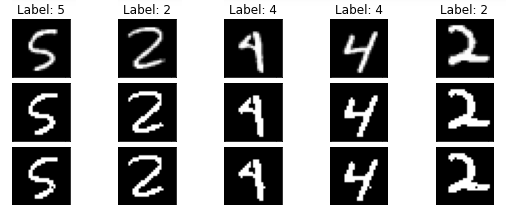}
    \caption{Original (the top row), binarized (the middle row), and perturbed with $\epsilon = 1$
(the bottom row) sample MNIST images.}
    \label{fig:my_label}
\end{figure}
We present our results for adversarial robustness
using the MNIST and IMDB datasets.
To test adversarial robustness on an input \Imc, we experiment with three different $\epsilon$ perturbation values by varying $\epsilon \in\{1, 3, 5\}$.
%For testing adversarial robustness on input \Imc,
%we selected one of the values $\epsilon$ between
%$ 1, 3, 5  $ 
%as the 
%test robustness varying the 
These values correspond to the
maximum number of bit-flips
applied on \Imc when checking robustness using the SAT solver. 
%We use the Glucose SAT solver from the PySAT
%library to solve the encoding.  
% (note that the examples are not expl).
The solver might take exponential time in the worst cases, so
we impose a timeout of $300$ seconds on the solver for each test instance. This
setup is the same for all experiments  in this section. The range of $\epsilon$ and the timeout
are chosen in this way to facilitate the comparison with BNNs~\cite{DBLP:conf/aaai/NarodytskaKRSW18}  at the end of this section. 

For  multi-class  problems,  such as MNIST classification,
the current state of the art of TsMs~\cite{tsetlincuda}
creates one TsM \Mmc for each target class during the pre-processing phase. 
This means that the final trained model consists of a sorted team of TsMs %$ (\Mmc_1,\Mmc_2,\cdots) $
where each member $ \Mmc_i $ will individually compute the sum (Eq.~\ref{eq:polynomial})
that votes for class $ i $.
Given  an interpretation \Imc (in fact its vector representation) as input, the TsM team will output the class associated with the 
TsM that has the highest sum, as explained in Section~2.2.

As the MNIST dataset contains $ 10 $   classes, we conduct 
the robustness experiment on each single TsM belonging to the trained TsM team.
%To verify the adversarial robustness of the MNIST dataset, $200$ random images from the
%testing set were selected independently of the correctness of the model prediction. 
We test the adversarial robustness on $20$ randomly selected MNIST   images per class, summing up to $200$ images.
Table~\ref{tab:robmnist} presents the robustness results for the three MNIST models from Table~\ref{tab:exeprimentMNISTmodels}. Columns
show the maximum number of bit-flips $ \epsilon $, the number of instances solved by the SAT solver, the amount of them being $\epsilon$-robust,
and the average time in seconds to solve these instances. ``Solved'' in this context means that
the SAT solver either finds a valid perturbation leading to the change in the classification of the
input image or   that   the model is $\epsilon$-robust
on this image. Figure~\ref{fig:my_label}  illustrates examples of  original images and  perturbed
images with $\epsilon = 1$ (that is, with $1$-pixel flip) which led to a change in the classification. 
\begin{table*}[t]
\caption{Robustness on the MNIST dataset.}
	\label{tab:robmnist}
\centering
\begin{tabular}{|c|c|c|c|c|c|c|c|c|c|}
	\hline
	\multicolumn{10}{|c|}{200 test instances} \\
 	\hline
  \hline
 &	 \multicolumn{3}{|c|}{model 1} & \multicolumn{3}{c|}{model 2}  & \multicolumn{3}{c|}{model 3} \\
	\hline
$\epsilon$ & solved	& $\epsilon$-robust & time (sec)  & solved	& $\epsilon$-robust & time (sec) &  solved	& $\epsilon$-robust & time (sec) \\
\hline
 1 & {200} & {171} & {0.92} & {200} & {184} & {2.81} & {200} & {184} & {13.85}	
  \\
   	\hline
 3 & 162 & 45 & 26.11  & 57 & 29 & 105.85  & 7 & 5 & 258.36 
\\
	\hline
5 & 124 & 0 & 70.61  & 12 & 0 & 268.54  & 0 & 0 & $\geq$ 300   \\
	\hline
\end{tabular}
\end{table*}
For $\epsilon = 1$, all instances were  solved withing the timeout of $300$ seconds and Models 2 and~3 (Table~\ref{tab:exeprimentMNISTmodels}), which are trained with more monomials, demonstrated
greater resistance to input perturbations.
However, increasing $\epsilon$ yields a more complex robustness encoding and solving more complex formulas, built from larger models such as Models 2 and 3, requires more time from the SAT solver. The solver was unable to complete any of the
instances for $\epsilon = 5$ for Model~3 within $300$ seconds (the timeout value). 

Table~\ref{tab:robimdb} presents the results of adversarial robustness for models trained on the IMDB
dataset. The goal of this experiment
is to study the effect of changing the
specificity value of the TsMs for robustness
verification.
The experiment is run on $100$ test instances that were correctly classified by both
models at the input without perturbation, that is, $\epsilon = 0$. The models were trained to take a binary vector of $5000$ features as an input. Robustness test
for $\epsilon = 1$ creates $5000$ possible combinations of bit flips, that is, including or not including
a word in the sentence (this is much more computationally challenging than the $28\times 28=784$ possible combinations of bit flips for $\epsilon=1$ in MNIST). For 
$\epsilon=5$, there are more than 120 billion
possible combinations. Model 2 (and 3) with a low $s$ value, that stimulates negated reasoning, 
resulted in greater resistance to adversarial inputs, %confirming our hypothesis based on the literature 
as expected~\cite{DBLP:conf/ijcai/Yadav0GG22}. 
Due to the fact
that Model 2 contains fewer literals per monomial, it was able to solve more instances within   300 sec., with lower average computation time when compared to Model~1. 
%124.925.010.000
%120.000.000.000

%We test the adversarial robustness on $ 200 $ %MNIST random images.
%The robustness encoding for the MNIST dataset has %on average
%$ 1 $ million variables  and $ 100 $ million %clauses. 

%Table~\ref{tab:robmnist} shows the maximum number of bit-flips $ \epsilon $,
%the average time ``Encoding'' (in seconds) for generating the encoding of a TsM,
%the average time for checking robustness ``Solver'' (as described in Section~\ref{subsec:robustness}), 
%the combined total time ``Tot.'',
%and
%the average  percentage of examples with which  a %TsM passes the robustness check ``S. Rob''.
%As a stronger notion of 
%robustness,
%we also show 
%the percentage of examples with which every TsM in the TsM team passes the robustness check ``Str. Rob''.

\begin{table}[t]
\caption{Robustness on the IMDB dataset.}
	\label{tab:robimdb}
\centering
\begin{tabular}{|c|c|c|c|c|c|c|}
	\hline
	\multicolumn{7}{|c|}{100 test instances} \\
 	\hline
  \hline
 &	 \multicolumn{3}{|c|}{model 1} & \multicolumn{3}{c|}{model 2}     \\
	\hline
$\epsilon$ & solved	& $\epsilon$-rob & time (s)   & solved	& $\epsilon$-rob & time (s)  \\
\hline
 1 & 98 & 36  & 42.73 & 100 & 56 & 10.92
  \\
   	\hline
 3 & 54 & 0 & 233.59  & 83 & 17 & 69.72
\\
	\hline
5 & 49 & 0 &  217.95 & 83 & 8 &  64.50\\
	\hline
\end{tabular}
\end{table}
 
% \begin{table}[htbp]
%   \centering
%   \begin{tabular}{c||c|c|c|c|c|c|c|c|c|}
%     \cline{2-10}
%     \multicolumn{1}{c}{} &\multicolumn{9}{c}{200 test instances} \\
%     \cmidrule{2-10}\morecmidrules\cmidrule{2-10}
%     \multicolumn{1}{c}{}  &\multicolumn{3}{|c|}{model 1} &\multicolumn{3}{c|}{model 2} &\multicolumn{3}{c|}{model 3}\\
%     \cline{2-10}
%     \multicolumn{1}{c}{} & \multicolumn{1}{|c|}{solved} & \multicolumn{1}{c|}{$\epsilon$-robust} & \multicolumn{1}{c|}{time} & \multicolumn{1}{c}{solved} &\multicolumn{1}{|c|}{$\epsilon$-robust} & \multicolumn{1}{c}{time} & \multicolumn{1}{|c|}{solved} & \multicolumn{1}{c|}{$\epsilon$-robust} & \multicolumn{1}{c|}{time} \\
%     \hline
%     $\epsilon$=1 & 200 & 171 & 0.92s & 200 & 184 & 2.81s & 200 & 184 & 13.85s \\ \cline{1-10} 
%     $\epsilon$=3 & 162 & 45 & 26.11s & 57 & 29 & 105.85s & 7 & 5 & 258.36s \\ \cline{1-10} 
%     $\epsilon$=5 & 124 & 0 & 70.61s & 12 & 0 & 268.54s & 0 & 0 &  $>300s$  \\ \cline{1-10} 
%   \end{tabular}
%   \caption{Robustness for MNIST dataset}
%   \label{tbl:rob}
% \end{table}

%Similarly, 
%Table~\ref{tab:robmr} shows the robustness %results for the MR dataset.
%In this case, we selected $ 150 $ MR random %examples for checking our method.
%On average, the encoded formula is made of
%$ 1 $ million variables  and $ 90 $ million %clauses.
%We can see as a general trend, that an increase %of the maximum allowed bit-flips, 
%follows a decrease of the number of the examples %from which 
%TsMs are robust against  perturbations.
 
\subsection{Equivalence and Similarity}\label{subsec:equivalence}

%\begin{table}[h]
%\caption{Sample Table Title} \label{sample-table}
%\begin{center}
%\begin{tabular}{ll}
%\textbf{PART}  &\textbf{DESCRIPTION} \\
%\hline \\
%Dendrite         &Input terminal \\
%Axon             &Output terminal \\
%Soma             &Cell body (contains cell nucleus) \\
%\end{tabular}
%\end{center}
%\end{table}
We run an equivalence test on the two models trained with the IMDB dataset using the same hyperparameter setup, that is, Model 2 and Model 3 from Table~\ref{tab:exeprimentIMDBmodels}. Both models achieved the same
accuracy on the test set. This test aims to identify if two models trained on the same dataset and having the same accuracy
would always give the same classification results for any input. The equivalence test, which verifies if their corresponding formulas are logically equivalent, showed
them to not  be fully equivalent in this strong sense.
We then considered similarity, which is easier to achieve as it only requires that the classification matches in a number of instances, %most classes 
%(where the threshold can be given as a parameter), 
for a particular dataset. % (Table~\ref{tab:sim}).
Table~\ref{tab:sim} presents  the similarity experiment on a test set with $100$ instances. Columns show the number of instances
solved by the SAT solver and the number of $\epsilon$-similar instances. ``Solved' in this context
means that the SAT solver determines within 300 sec. whether two models return the same classification
  given an identical input (and perturbations of it quantified by  $\epsilon$).  The experiment is run with three different $\epsilon$
values. The results reported for $\epsilon  = 0$ show the similarity of two models in the input without
perturbations. Models 2 and 3 are more consistent in their output predictions. They
resulted in a single different classification for unperturbed input. Increasing the $\epsilon$ makes
the SAT solver time out on several instances. Even then, Models 2 and 3 are  $\epsilon$-similar on more instances for $\epsilon=1$.
%\todo{check}
\begin{table}[t]
\caption{Similarity check on the IMDB dataset.}
	\label{tab:sim}
\centering
\begin{tabular}{|c|c|c|c|c|}
	\hline
	\multicolumn{5}{|c|}{100 test instances} \\
 	\hline
  \hline
 &	 \multicolumn{2}{|c|}{model 1 \& model 2} &\multicolumn{2}{c|}{model 2 \& model 3 }     \\
	\hline
$\epsilon$ &solved	& $\epsilon$-similar  &   solved	& $\epsilon$-similar  \\
\hline
0 & 100 & 93   & 100 & 99  
  \\
   	\hline
 1 & 93 & 43   & 74 & 70     
 %3 & 54 & 0 & 233.6 & 83 & 17 & 69.7  
\\
	\hline
3 & 59 & 41   & 60 & 40   \\
%5 & 49 & 0 & 217.9 & 83 & 8 & 64.5 \\
	\hline
\end{tabular}
\end{table}

 \subsection{Comparison with BNNs}\label{sec:tsmbnn}
 %Accuracy Test  TsMs vs. Neural Networks}
%\subsubsection{Image Classification}
%\label{subsec:image}
In this section, we compare   
TsMs and BNNs w.r.t. accuracy
on the MNIST-c dataset and we discuss 
robustness in both cases on the MNIST-back-image dataset.

\paragraph{Accuracy Tests}
Table \ref{tab:d} shows the comparison of accuracy results on MNIST-c for the state-of-the-art TsM   for MNIST (with $N=8000$,  $ T = 6400$,  and $s = 5$.)
%in the beginning of this 
and a BNN using 
the same hyperparamaters as in \cite{DBLP:conf/nips/JiaR20}. 
%The MNIST-MLP is used here for comparisons from \cite{DBLP:conf/nips/JiaR20}. 
%The MNIST-MLP consists of $5$ hidden layers with $[500, 300, 200, 100, 10]$ neurons. 
Both the TsM and the BNN  are trained on the standard MNIST training data and tested on MNIST-c. 
Test images are sampled from MNIST-c as follows: with $p$ probability, 
a test image from the corrupted data is selected; 
one of the $15$ corruptions is selected with uniform probability, 
where $p=0$ means that the standard uncorrupted test data is used. 
In Table \ref{tab:d}, we can see that the TsM achieves competitive %(in fact higher)
accuracy results with different values of $p$, indicating  good performance of TsMs in comparison 
with BNNs on the MNIST-c test data (note that when $p=0$ then
we have the classical MNIST).

\begin{table}[h]
 \caption{Accuracy comparison on the MNIST-c dataset}
	\label{tab:d}
	\centering
	\begin{tabular}{ccccccc}
		\toprule
		MNIST-c & $p=0$	& $p=0.25$ & $p=0.5$ & $p=0.75$ \\
		\midrule
		TsM	&	99.3 & 92.5   & 88.1 &  81.2   \\
		BNN &	97.46 & 90.1   &  83.5 &  77.2    \\
		\bottomrule
	\end{tabular}
\end{table}

\paragraph{Robustness}
We now consider the robustness results for BNN reported by Narodytska et al.
in their work to verify the properties of BNNs~\cite{DBLP:conf/aaai/NarodytskaKRSW18}, with the robustness results achieved by
TsMs. Unfortunately, their code is not available to the community, making it impossible
%to reproduce the results from the paper or 
to rerun their code in the same conditions. %hardware we used. %Another
%difficulty is the lack of complete outcomes for all datasets and models presented in their
%research. 
We consider the results for $\epsilon$-robustness using the MNIST-back-image
dataset, presented in the main text of in their paper. 
To visualize the results under  similar conditions, we trained the TsM
model  using the MNIST-back-image dataset. Both the TsM and BNN models considered in this experiment
%do not have the state-of-the-art accuracy and 
are scaled down for 
%to   perform  the time-consuming 
 robustness verification.
The accuracy of the TsM (with $N=1000$, $T=25$, and $s=10$)   after $400$ iterations was $81.56\%$ while the accuracy of the BNN model used by Narodytska et al. was $70\%$~\cite{DBLP:conf/aaai/NarodytskaKRSW18}.
%as described in the previous section. 
%To be as
%close as possible to the setup presented by Narodytska et al., the robustness experiment
 % only considers the test images that were correctly classified by the model. 
  We %pick %up
 randomly selected $20$ images for each of the $10$ classes, which results in $200$ test instances.
\begin{table}[t]
\caption{Robustness on the MNIST-back-image dataset.}
	\label{tab:robimdbtwo}
\centering
\begin{tabular}{|c|c|c|c|c|c|c|}
	\hline
	\multicolumn{7}{|c|}{200 test instances} \\
 	\hline
  \hline
 &	 \multicolumn{3}{|c|}{TsM} & \multicolumn{3}{c|}{BNN }     \\
	\hline
$\epsilon$ &solved	& $\epsilon$-rob & \% $\epsilon$-rob   & solved	& $\epsilon$-rob & \% $\epsilon$-rob  \\
\hline
1 & 200 & 145 & 72.5\% & 191 & 138 & 72.25\%
  \\
   	\hline
 3 & 116 & 14 & 12.07\% & 107 & 20 & 18.69\%   
 %3 & 54 & 0 & 233.6 & 83 & 17 & 69.7  
\\
	\hline
5 & 34 & 0 & 0\% & 104 & 3 & 2.88\% \\
%5 & 49 & 0 & 217.9 & 83 & 8 & 64.5 \\
	\hline
\end{tabular}
\end{table}
The experiment is run with three different perturbation values $\epsilon \in \{1, 3, 5\}$. It uses the
same SAT solver as the authors of the BNN verification paper, i.e., Glucose, and the same
timeout value of 300 seconds for each test instance. Table~\ref{tab:robimdbtwo} presents results for both
models. The hardware used in both experiments is not the same. %varies significantly.
%in performance, so the time
%results to solve an instance are not comparable. %It may  negatively affect the number of
%solved instances for the TsM model. 
%However, the SAT solver manages to solve all instances
%for $\epsilon = 1$ in the case of the TsM model, and only 191 for the BNN model. 
However, the percentages of
$\epsilon$-robust instances in these two models are similar. This indicates that both models can
have approximate robustness performance 
on  MNIST-back-image perturbed inputs, however more tests are needed.
%(perhaps using different timeout and SAT solvers).

\section{Conclusion}\label{sec:conclusion}
%TsMs are  a 
%fast and efficient 
%method for finding data patterns. % in data.
%that provides ways of  checking bias and fairness
%of learned models.
%as it is inherently explainable.
%
%As the TsM are more adopted for various %classification tasks and high-stakes decisions,
%there is the need of certifying properties of the %learned model.
%In order to do so,
We present an exact encoding of TsMs into propositional logic and we show
how to verify properties such as adversarial robustness
%and %equivalence between models
using a SAT solver, following an earlier approach for BNNs. 
We show the correctness 
of our encoding and present experimental results for adversarial robustness, equivalence, similarity. 
We then compare the accuracy  between
TsMs and BNNs,
%a model that also provides ways
%for checking robustness and equivalence 
using the MNIST-c dataset (designed for testing accuracy with corrupted instances), and discuss robustness in both cases using the MNIST-back-image dataset.
As future work, we plan to investigate optimizations of SAT solvers~\cite{DBLP:conf/nips/JiaR20}  for robustness verification. 
%, and between TsMs and LSTMs using the  MR dataset. 
%
%Finally, 
%The method is effective   in non-trivial classification tasks.
%Since TsMs accept binary inputs we could not compare the TsM approach with 
%BNNs.

\section{Acknowledgements}
Ozaki is supported by the NFR projects 316022 and 322480.

%% The file named.bst is a bibliography style file for BibTeX 0.99c
%\bibliographystyle{named}
%\bibliography{ijcai23}

\section{Learning with Tsetlin Machines}
In this section we provide more details about   TsMs~\cite{granmo2018tsetlin} and a running example for sentiment classification. 

\usetikzlibrary{arrows,automata}
\subsection{Tsetlin Machines: Feedback Types}
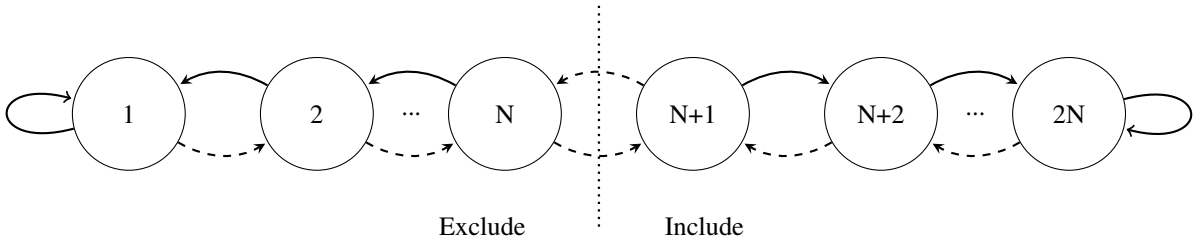
\begin{figure*}[h]
	\begin{tikzpicture}[shorten >=1pt,node distance=2.5cm,on grid,auto]
		
		\node (q0) [state, minimum size=1.5cm] at(0,0) {1};
		\node (q1) [state, right = of q0, minimum size=1.5cm] {2};
		\node (q2) [state, right = of q1, minimum size=1.5cm] {N};
		\node (q3) [state, right = of q2, minimum size=1.5cm] {N+1};
		\node (q4) [state, right = of q3, minimum size=1.5cm] {N+2};
		\node (q5) [state, right = of q4, minimum size=1.5cm] {2N};
		
		\path [-stealth, thick]
		(q0) edge [loop left] node[above,yshift=4pt]{}()
		(q1) edge[bend right] (q0)
		(q2) edge[bend right] (q1)
		(q3) edge[bend right][dashed] (q2)
		(q3) edge[bend left]  (q4)
		(q4) edge[bend left]  (q5)
		(q2) edge[bend right][dashed] (q3)
		(q4) edge[bend left][dashed]  (q3)
		(q5) edge[bend left][dashed]  (q4)
		(q1) edge[bend right][dashed] (q2)
		(q0) edge[bend right][dashed] (q1)
		(q5) edge [loop right]  node[above,yshift=4pt] {}();
		
		\draw[dotted, line width=0.3mm] (6.25,-1.5)--(6.25,1.5);
		
		\node[text width=6cm, anchor=west, right] at (7,-1.5)
		{Include};
		\node[text width=6cm, anchor=west, right] at (4,-1.5)    
		{Exclude};
		\node[text width=6cm, anchor=west, right] at (3.5,0)
		{...};  
		\node[text width=6cm, anchor=west, right] at (11,0)
		{...};              
		%   \matrix[draw,anchor=south east,xshift=80ex,yshift=-15ex,matrix of nodes,row sep=.5ex,
		%       column 2/.style={text width=1.5cm,align=right,anchor=center}] 
		%{
			%   \draw[dashed, ->] (-.5,0) -- (.5,0); & $Reward$\\
			%   \draw[->] (-.5,1) -- (.5,1); & $Penalty$\\
			%};
	\end{tikzpicture}
	\caption{A Tsetlin Automaton (reward is represented on dashed arrows and penalty on simple arrows)}
	\label{ta}
\end{figure*}

%\gls{tm}
%A TsM is a machine learning algorithm proposed by built based on propositional logic. Ole-Christoffer Granmo first presented it in 2018~\cite{DBLP:journals/corr/abs-1804-01508}. 
%The name "Tsetlin Machine" originates from the Tsetlin Automaton, shown in Figure \ref{ta} introduced by M. L. Tsetlin in 1961. \gls{tms} have recently shown a competitive accuracy and learning speed to other well-known machine learning approaches such as Naive Bayes Classifier, Logistic Regression, SVMs or \gls{dnns}. They are suitable for various tasks of classification \cite{granmo2019convtsetlin}, natural language processing \cite{bhattarai2022fakenews} \cite{bhattarai2021novelty} and regression\cite{abeyrathna2020nonlinear} \cite{abeyrathna2021convreg} \cite{abeyrathna2020integerregression}, among others. \gls{tms} 

The name ``Tsetlin Machine'' originates from the Tsetlin Automaton (Figure \ref{ta}), introduced by M. L. Tsetlin in 1961.
TsMs use hardware-near bitwise operators, thereby minimizing the memory usage and computation cost. Furthermore, the whole learning process and recognition is based on bit manipulation. Before starting the TsM training, all the inputs need to be binarized. There exist several techniques for binarizing the dataset depending on the training data format and the objective of the model. For image classification tasks, it is recommended to binarize the training data using an adaptive Gaussian thresholding procedure \cite{granmo2019convtsetlin}. Binarization will result in $1$ bit per pixel channel. Although TsMs have shown competitive accuracy results on simple grayscale image data sets such as MNIST or Fashion-MNIST, they still show quite poor accuracy for colored image datasets, for example, CIFAR-100, compared to other state-of-the-art models \cite{granmo2019convtsetlin}. Binarizing natural language is less harmful because many text vectorization techniques already produce binary vectors, e.g., bag-of-words. As such, one loses less information than when binarizing image inputs.

The basic  TsM for binary classification takes as input an  $s$-dimensional vector $X=[x_1, ..., x_s]$ %, where $s$ is a finite integer number, 
and produces a classification output $\hat{y}$. The output $\hat{y}$ is one of two possible classes, $\hat{y}=1$ or $\hat{y}=0$, corresponding to \emph{true} and \emph{false}. 
We associate each position in the input vector to a variable $x_i$, with $1\leq i\leq s$,  
and form a  literal set $$L=\{x_1, ..., x_s, \neg{x_1}, ..., \neg{x_s} \}.$$ 
Trained TsMs are represented as formulas, using monomials, also called \emph{conjunctive clauses}~\cite{granmo2018tsetlin}, $\mathcal{C}^{+}$ and $\mathcal{C}^{-}$, consisting of literals from the  set $L$. The number of monomials is predefined by $n$, which is an input parameter to  Algorithm \ref{a:tsm}. Half of the monomials are assigned positive polarity and the other half are assigned negative polarity. Thus, the size of $\mathcal{C}^{k}$, where $k\in\{+,-\}$, is $n/2$. The monomial outputs are combined into a classification decision, 
as explained in the main text (see Equation~\ref{eq:polynomial}).
%through the subtraction of sums of monomials with opposite polarity using the unit step function $u(c)= 
%\begin{cases} 
%     1, & c \geq 0 \\
%     0, & c < 0
%\end{cases}$

%\[ \hat{y} = u \Big(\sum^{n/2}_{j=1} C^+(j) - \sum^{n/2}_{j=1} C^-(j)\Big)\]
In simple words, the classification is based on majority voting where monomials with positive polarity are voting for \emph{true} classification and monomials with negative polarity for \emph{false} classification. 
%Unlike the intertwined nature of pattern representation in \gls{dnns}, \gls{tms} decompose problems into self-contained patterns that are expressed using monomials. 
This formula provides an interpretable explanation of the model, which is useful to check whether decisions are unfair, biased, or erroneous (see Example~1).

Each monomial $C^k(j)$ is composed from the set of literals $L$ whereby each of the literals is associated with its own TA. The automation decides whether to \emph{include} or \emph{exclude} a given literal from the literal set $L$ in the monomial $C^k(j)$. As illustrated in Figure \ref{ta}, the decision to \emph{include} or \emph{exclude} a literal is based on the function $\Gmc(\phi_a)$, where $\phi_a$, $a\in\{1, 2, ..., 2N\}$ is the current state of the TA. 
\begin{equation*}
	\Gmc(\phi_a) =
	\begin{cases}
		exclude, &  1 \leq a \leq N \\
		include, &  N+1 \leq a \leq 2N
	\end{cases}
\end{equation*}
$N$ is a hyperpartameter of the TsM  model. % that needs to be predefined.
The training procedure is given by Algorithm \ref{a:tsm}. The state transition of each Tsetlin Automaton governs learning. The reward transitions are indicated with solid lines in Figure \ref{ta}, and the penalization transitions are indicated with dotted lines. Learning which literals to include in the monomials is based on reinforcement. Each monomial $C^k(j)$ is trained by receiving one of two types of feedback, depending on their polarity and desired output classification. Both feedbacks are  stochastic. The value computed by the function $\sf clip$ in Algorithm \ref{a:tsm}   bounds the sum between the interval $[-T, T]$ and it is used to guide the random selection for the feedback type. Monomials can be given either a Type I Feedback that generalizes by producing frequent patterns or a Type II Feedback that specializes and strictly regulates the patterns \cite{granmo2018tsetlin}.

\noindent
\textbf{Type I Feedback}   is given to the TA of monomials $C^k(j)$ when the output is equal to $k$ (that is, when they are correct). Table~\ref{tbl:feed1} contains the probabilities of receiving ``Reward'', ``Inaction'', or ``Penalty'' given the monomial polarity and literal value. Inaction feedback is a novel extension of the TA introduced by Granmo (2018).
%in addition to rewards and penalties. 
Inaction is simply leaving the TsM untouched. 
%Whenever $C^k(j)=1$ and $l_k=1$, where $l_k\in L$, \emph{include} is rewarded and \emph{exclude} is penalized with an equally high probability of $\dfrac{s-1}{s}$. This boosts the true positive outputs and tackles false negative outputs. If $C^k(j)=0$ and $l_k=0$ then \emph{include} is penalized and \emph{exclude} is rewarded with a low probability of $\dfrac{1}{s}$. 
The variable $s$ is a  hyperparameter fed to the learning algorithm that represents specificity. 
%It controls the probability of getting specific feedback. 
It controls how strongly the model prefers to include literals in the monomial. The greater $s$, the more literals are included. % \cite{ijcai2022p616}.

\textbf{Type II Feedback} is given to the TA of monomial $C^k(j)$ when the output is not equal to $k$. Type II feedback actions are visualized in Table \ref{tbl:feed2}. Type I and Type II Feedbacks aim together at minimizing the output error.

\begin{table}[t]
	\centering
	\begin{tabular}{|c|c|c|c|c|c|}
		\hline
		\multicolumn{1}{|c|}{\textbf{Action}} & \multicolumn{1}{|c|}{\textbf{Monomial}} & \multicolumn{2}{|c|}{1} & \multicolumn{2}{|c|}{0} \\
		\cline{2-6}
		\multicolumn{1}{|c|}{} & \multicolumn{1}{|c|}{\textbf{Literal}} & \multicolumn{1}{|c|}{1} & \multicolumn{1}{|c|}{0} & \multicolumn{1}{|c|}{1} & \multicolumn{1}{|c|}{0} \\
		\cline{1-6}
		\multicolumn{1}{|c|}{\emph{Incl.}} & \multicolumn{1}{|c|}{P(Reward)} & \multicolumn{1}{|c|}{$\dfrac{s-1}{s}$} & \multicolumn{1}{|c|}{-} & \multicolumn{1}{|c|}{0} & \multicolumn{1}{|c|}{0} \\ [2ex]
		\cline{2-6}
		\multicolumn{1}{|c|}{} & \multicolumn{1}{|c|}{P(Inaction)} & \multicolumn{1}{|c|}{$\dfrac{1}{s}$} & \multicolumn{1}{|c|}{-} & \multicolumn{1}{|c|}{$\dfrac{s-1}{s}$} & \multicolumn{1}{|c|}{$\dfrac{s-1}{s}$} \\ [2ex]
		\cline{2-6}
		\multicolumn{1}{|c|}{} & \multicolumn{1}{|c|}{P(Penalty)} & \multicolumn{1}{|c|}{0} & \multicolumn{1}{|c|}{-} & \multicolumn{1}{|c|}{$\dfrac{1}{s}$} & \multicolumn{1}{|c|}{$\dfrac{1}{s}$} \\ [2ex]
		\cline{1-6}
		\multicolumn{1}{|c|}{\emph{Excl.}} & \multicolumn{1}{|c|}{P(Reward)} & \multicolumn{1}{|c|}{0} & \multicolumn{1}{|c|}{$\dfrac{1}{s}$} & \multicolumn{1}{|c|}{$\dfrac{1}{s}$} & \multicolumn{1}{|c|}{$\dfrac{1}{s}$} \\ [2ex]
		\cline{2-6}
		\multicolumn{1}{|c|}{} & \multicolumn{1}{|c|}{P(Inaction)} & \multicolumn{1}{|c|}{$\dfrac{1}{s}$} & \multicolumn{1}{|c|}{$\dfrac{s-1}{s}$} & \multicolumn{1}{|c|}{$\dfrac{s-1}{s}$} & \multicolumn{1}{|c|}{$\dfrac{s-1}{s}$} \\ [2ex]
		\cline{2-6}
		\multicolumn{1}{|c|}{} & \multicolumn{1}{|c|}{P(Penalty)} & \multicolumn{1}{|c|}{$\dfrac{s-1}{s}$} & \multicolumn{1}{|c|}{0} & \multicolumn{1}{|c|}{0} & \multicolumn{1}{|c|}{0} \\ [2ex]
		\cline{1-6}
	\end{tabular}
	\caption{Type I Feedback}
	\label{tbl:feed1}
\end{table}

%This feedback penalizes \emph{exclude} whenever $C^k(j)=1$ and $l_k=0$, which means that they are false positive outputs. When $C^k(j)=0$, only Inaction-feedback gets triggered.

\begin{table}
	\centering
	\begin{tabular}{|c|c|c|c|c|c|}
		\hline
		\multicolumn{1}{|c|}{\textbf{Action}} & \multicolumn{1}{|c|}{\textbf{Monomial}} & \multicolumn{2}{|c|}{1} & \multicolumn{2}{|c|}{0} \\
		\cline{2-6}
		\multicolumn{1}{|c|}{} & \multicolumn{1}{|c|}{\textbf{Literal}} & \multicolumn{1}{|c|}{1} & \multicolumn{1}{|c|}{0} & \multicolumn{1}{|c|}{1} & \multicolumn{1}{|c|}{0} \\
		\cline{1-6}
		\multicolumn{1}{|c|}{\emph{Include}} & \multicolumn{1}{|c|}{P(Reward)} & \multicolumn{1}{|c|}{0} & \multicolumn{1}{|c|}{-} & \multicolumn{1}{|c|}{0} & \multicolumn{1}{|c|}{0} \\ [2ex]
		\cline{2-6}
		\multicolumn{1}{|c|}{} & \multicolumn{1}{|c|}{P(Inaction)} & \multicolumn{1}{|c|}{1.0} & \multicolumn{1}{|c|}{-} & \multicolumn{1}{|c|}{1.0} & \multicolumn{1}{|c|}{1.0} \\ [2ex]
		\cline{2-6}
		\multicolumn{1}{|c|}{} & \multicolumn{1}{|c|}{P(Penalty)} & \multicolumn{1}{|c|}{0} & \multicolumn{1}{|c|}{-} & \multicolumn{1}{|c|}{0} & \multicolumn{1}{|c|}{0} \\ [2ex]
		\cline{1-6}
		\multicolumn{1}{|c|}{\emph{Exclude}} & \multicolumn{1}{|c|}{P(Reward)} & \multicolumn{1}{|c|}{0} & \multicolumn{1}{|c|}{0} & \multicolumn{1}{|c|}{0} & \multicolumn{1}{|c|}{0} \\ [2ex]
		\cline{2-6}
		\multicolumn{1}{|c|}{} & \multicolumn{1}{|c|}{P(Inaction)} & \multicolumn{1}{|c|}{1.0} & \multicolumn{1}{|c|}{0} & \multicolumn{1}{|c|}{1.0} & \multicolumn{1}{|c|}{1.0} \\ [2ex]
		\cline{2-6}
		\multicolumn{1}{|c|}{} & \multicolumn{1}{|c|}{P(Penalty)} & \multicolumn{1}{|c|}{0} & \multicolumn{1}{|c|}{1.0} & \multicolumn{1}{|c|}{0} & \multicolumn{1}{|c|}{0} \\ [2ex]
		\cline{1-6}
	\end{tabular}
	\caption{Type II Feedback}
	\label{tbl:feed2}
\end{table}

%The variable $s$ is a specificity hyperparameter which is an input to the Algorithm \ref{a:tsm} that controls the frequency of the Reward, Inaction, and Penalty for both feedback types. 

\subsection{Tsetlin Machines: An Example Run}

For a didactic example, consider a simple TsM with $2$ monomials in the (binary) sentiment classification task. 
Given an input vector $X$, the output can be classified as a positive sentiment %(class 1) 
or as a negative sentiment. % (class 0). 
During the pre-processing step, $5$ features 
$$[\sf great, boring, bad, interesting, truly]$$ 
have been chosen 
%that are used 
to describe the sentiment. The features are binarized according to their presence in the sentence: if the word occurs in the sentence, the value is set to 1 and 0 otherwise. The binary vector is used to train the model. %\\

\noindent The features form the literal set: 
$$L=\{\sf great, boring, bad, interesting, truly,$$
$$\sf
\neg{great} , \neg{boring}, \neg{bad} , \neg{interesting}, \neg{truly}
%\overline{great} , \overline{boring}, \overline{bad} , %\overline{interesting}, \overline{truly}
\}$$

\noindent The hyperparameter $N$ is set to 5 which means that each automaton has $2N$ states. Figure \ref{run0} shows the initial memory state for both a positive monomial $C^+$ voting for positive classification, and anegative monomial $C^-$ voting for negative classification. The initial memory state is set to $5$ for all literals, which means that all are equally ``Forgotten''. 
%The TsM algorithm is self-correcting, so initialization values should not affect the result after running just a few iterations.

\begin{figure}
	\centering  
	\subfloat[\centering Positive monomial $C^+$]{{\includegraphics[scale=0.2]{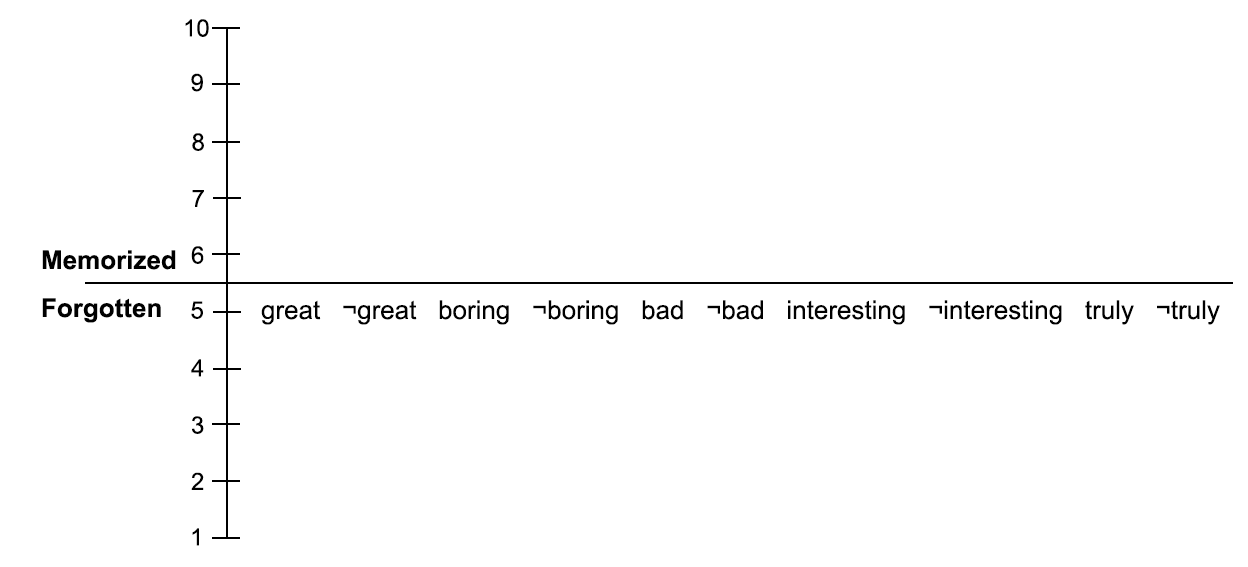} }}
	\qquad
	\subfloat[\centering Negative monomial $C^-$]{{\includegraphics[scale=0.2]{0.png} }}
	\caption{Initialization of the example run}
	\label{run0}
\end{figure}

\noindent Given the first input: \\

\emph{This movie was \textbf{truly} \textbf{great} and \textbf{interesting}} $\rightarrow$ \emph{Positive} \\

\noindent This sentence will produce an input vector $X_1=[1, 0, 0, 1, 1]$ and the label $y_1$ is positive. All literals are in the ``Forgotten'' state, so there are no literals to vote for any of the monomials. 
%The empty condition does not specify any requirements so it is usually defined to be 1-valued in propositional logic, even though \gls{tms} have shown to obtain higher test accuracy when empty monomials are defined as 0-values \cite{tmcovergence}. 
%For the simplicity of this example, we initialize 
Empty monomials can be defined in different ways and,
for simplicity of this example, we define both of them as $1$, that is, $C^+ = 1$ and $C^- = 1$. We also skip the random selection of the monomials to receive feedback. The positive label $y_1$ for $X_1$ makes $C^+$ receive Type I Feedback and $C^-$ receive Type II Feedback. 
%Both feedback will produce \emph{exclude} action for all literals in $L$ as the current state is $a_{i\in\{1, 2,..., |L|\}}=5$ and $1 \leq a_{i\in\{1, 2,..., |L|\}} \leq 5$. 
We have that $C^+$ votes for the true positive output class and it is boosted by Type I Feedback. In contrast, $C^-$ votes for false negative output and it is handled by Type II Feedback, which penalizes negative literals. Figure \ref{run1} illustrates literal evaluation for both monomials after receiving feedback. The black arrows indicates the high probability (1 or $\dfrac{s-1}{s}$), while gray arrows indicates low probability ($\dfrac{1}{s}$). The inaction feedback is omitted in the figure. %Moving on to the next input, assume all rewards and %penalization happens no matter the probability. 

\begin{figure}
	\centering
	\subfloat[\centering Positive monomial $C^+$]{{\includegraphics[scale=0.2]{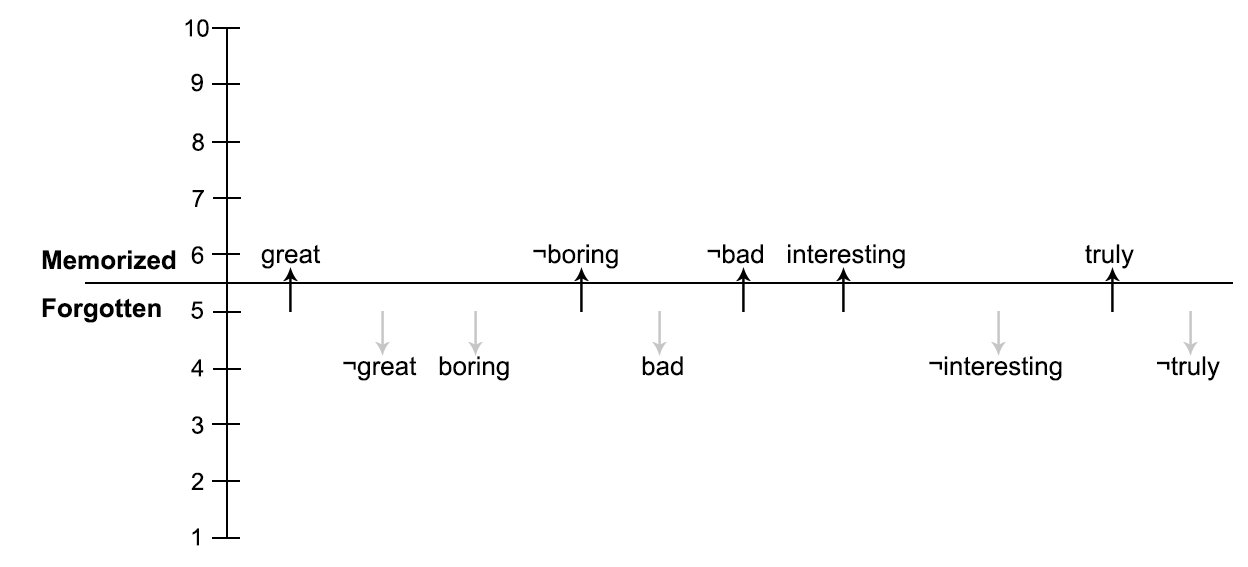} }}
	\qquad
	\subfloat[\centering Negative monomial $C^-$]{{\includegraphics[scale=0.2]{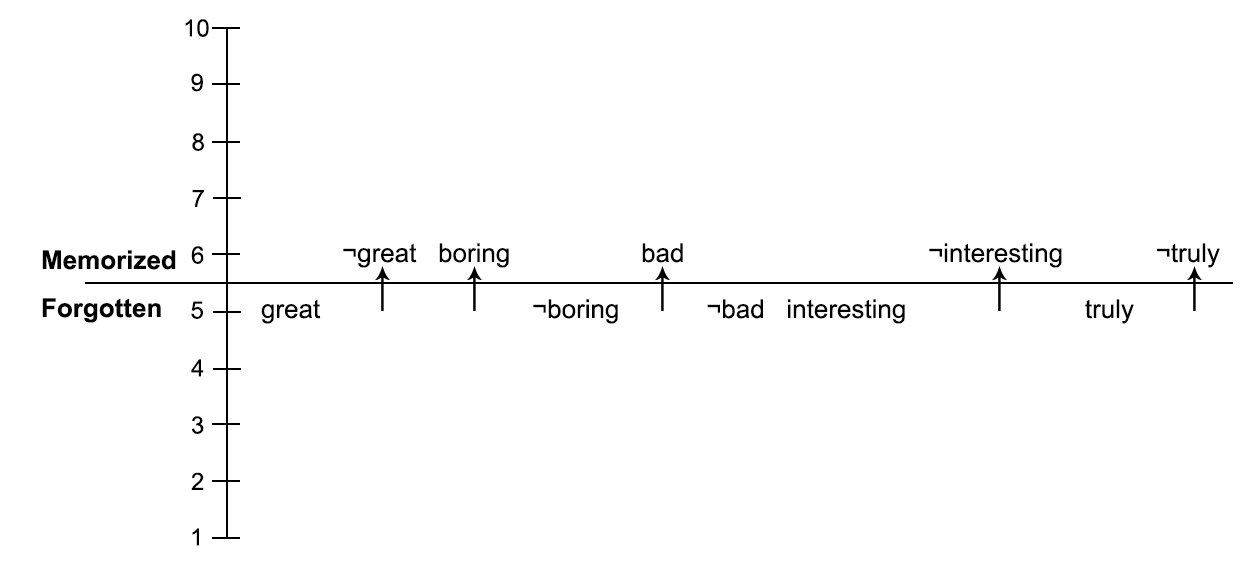} }}
	\caption{Memory update for ``\emph{This movie was \textbf{truly} \textbf{great} and \textbf{interesting}} $\rightarrow$ \emph{Positive}''}
	\label{run1}
\end{figure}

\noindent Next, let the second input be: \\

\emph{\textbf{Truly} \textbf{boring} and \textbf{bad} movie} $\rightarrow$ \emph{Negative} \\

\noindent This sentence will produce an input vector $X_2=[0, 1, 1, 0, 1]$ and the label $y_2$ is negative. Figure \ref{run1} illustrates the ``Memorized'' literals voting for positive classification, which are\footnote{We write $\overline{p}$ for the negation of a positive literal $p$, which in propositional logic is $\neg p$.}
$$C^+ = (\sf great \cdot \overline{ boring} \cdot \overline {bad} \cdot interesting \cdot truly)$$ 
and ``Memorized''  literals voting for negative, which are $$C^- = (\sf \overline{great}  \cdot boring \cdot bad \cdot \overline{interesting}\cdot  \overline{truly} ).$$ Feeding the input $X_2$ to the monomials results in $C^+ = 0 \cdot 0 \cdot 0 \cdot 0 \cdot 1 = 0 $ and $C^- = 1 \cdot 1 \cdot 1 \cdot 1 \cdot 0 = 0$. The label $y_2$ is negative 
for $X_2$, therefore $C^+$ receives Type II Feedback and $C^-$ receives Type I Feedback. Type II Feedback penalizes only false positive outputs, so for $C^+$ %and $y_2=0$ 
the output is a true negative and all literals remain unchanged as seen in Figure \ref{run2}. Type I Feedback is combating false negative output, that is,
when $C^-$ is \emph{not} trigged and the label $y_2$ is negative. 
The literals $\sf great, \neg boring, \neg bad, interesting, truly$ are in state  $a_{i\in\{2,3,5,8,10\}}=6$,
% such that $N+1 \leq a_i \leq 2N$, 
so there is an \emph{include} action in Type I Feedback. Both positive and negative literals get penalized with equally low probability. The remaining literals are in state $a_{i\in\{1,4,6,7,9\}}=5$ meaning they perform \emph{exclude} action which rewards all literals with low probability as well. Considering the probability, assume that only the literals marked with black color for the negative monomial in Figure \ref{run2} changed their state $a_{i\in\{1,4,10\}}$ by -1 (that is, $\sf great, \neg boring, \neg truly$,). The rest of the literals remained in the previous state of Figure \ref{run1}.

\begin{figure}
	\centering
	\subfloat[\centering Positive monomial $C^+$]{{\includegraphics[scale=0.2]{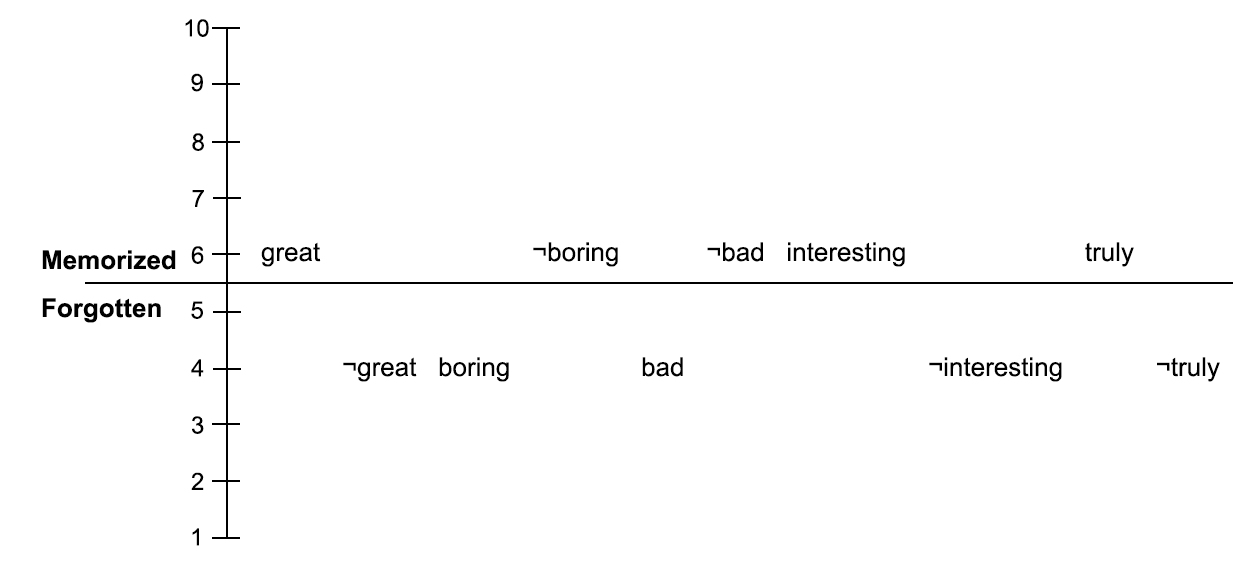} }}
	\qquad
	\subfloat[\centering Negative monomial $C^-$]{{\includegraphics[scale=0.2]{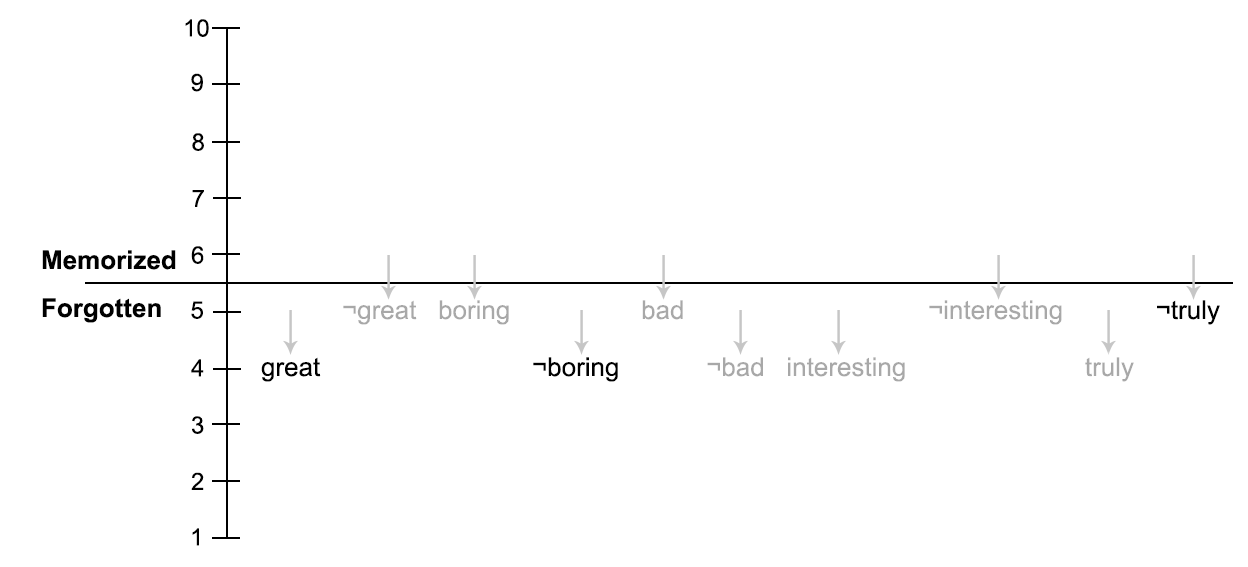} }}
	\caption{Memory update for ``\emph{\textbf{Truly} \textbf{boring} and \textbf{bad} movie} $\rightarrow$ \emph{Negative}''}
	\label{run2}
\end{figure}

\begin{figure}
	\centering
	\subfloat[\centering Positive monomial $C^+$]{{\includegraphics[scale=0.19]{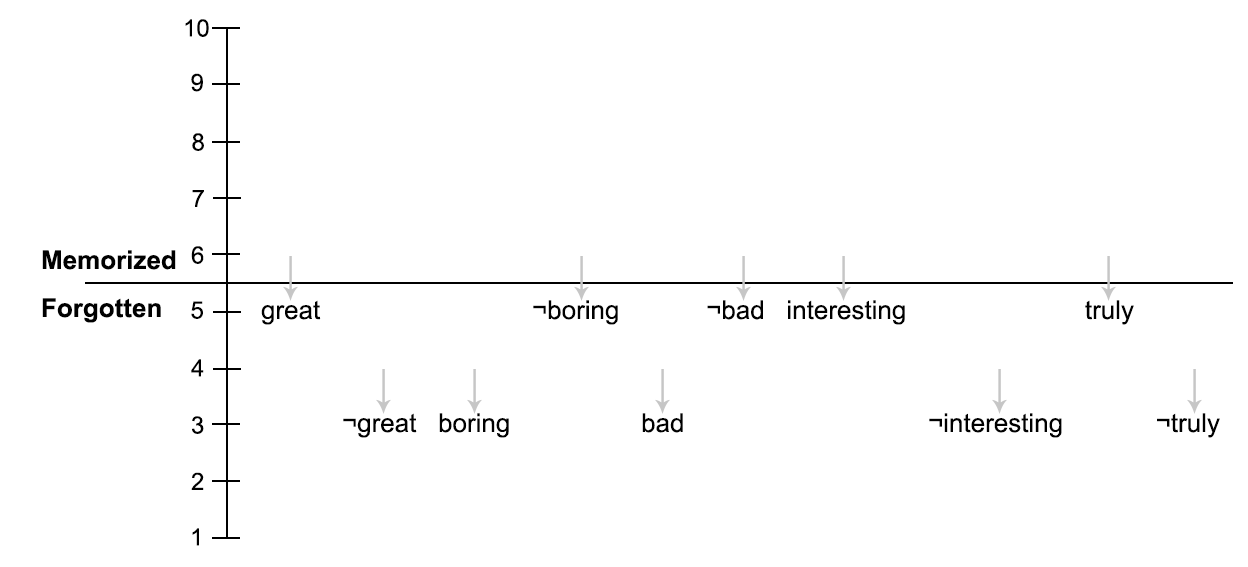} }}
	\qquad
	\subfloat[\centering Negative monomial $C^-$]{{\includegraphics[scale=0.19]{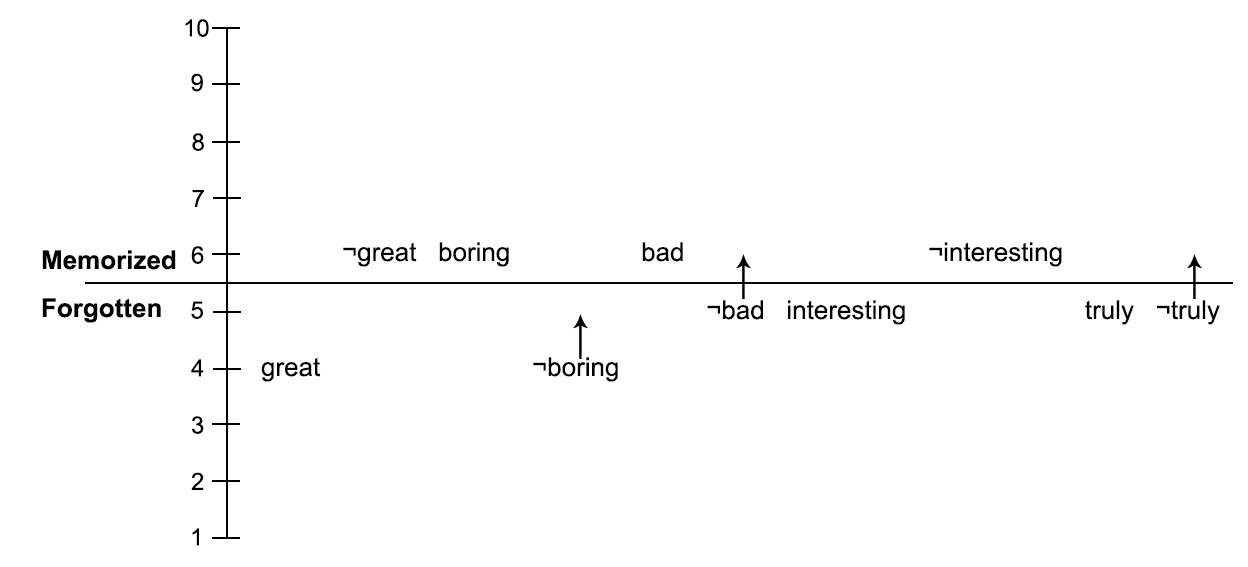} }}
	\caption{Memory update for ``\emph{I thought this movie was going to be \textbf{bad} and \textbf{boring} but it was \textbf{truly} good} $\rightarrow$ \emph{Positive}''}
	\label{run3}
\end{figure}

\noindent Now, let the third input be: \\

\emph{I thought this movie was going to be \textbf{bad} and \textbf{boring} but it was \textbf{truly} good} $\rightarrow$ \emph{Positive} \\

\noindent This sentence will produce an input vector $X_3=[0, 1, 1, 0, 1]$ with the positive label $y_3$. The ``Memorized'' literals that vote for the negative monomial are now $$C^- = (\sf \overline{ great} \cdot boring \cdot bad \cdot \overline{ interesting}).$$ The negative monomial will evaluate to $C^- = 1 \cdot 1 \cdot 1 \cdot 1 = 1$ being a false positive. Literals voting for $C^+$ are the same as previously, that is, $$C^+ = (\sf great \cdot \overline {boring} \cdot \overline{bad}   \cdot interesting \cdot truly).$$ Given the input vector, $X_3$ the positive monomial is $C^+ = 0 \cdot 0 \cdot 0 \cdot 0 \cdot 1 = 0$, which implies that it will not vote for a positive classification. We then have a  false negative. The positive label $y_3$ implies that Feedback Type I is provided to $C^+$, and Feedback Type II is given to $C^-$. Type I Feedback penalizes all of the literals being in the \emph{include} state and rewards literals having \emph{exclude} state, meaning all of the literals in the positive monomial decrements their states with a low probability of $1/s$. Negative monomials should seek to return 0 for the input $X_3$, but they do otherwise. All literals that are in the %state of
\emph{exclude} zone, that is, states below or equal to 5, and whose value is 0 on the input ($\neg \sf boring, \neg bad, \neg truly$) are penalized with a probability of 1 by the Type II Feedback (Figure~\ref{run3}).

This finishes our illustrated presentation of 
how a TsM learns patterns from binary classified inputs.
In the next section we provide proofs for the theorems 
in the main text.
\section{Proofs for Section III}

\theoremtwo*
\begin{proof}[Sketch]
	%	$ (\Rightarrow) $
	Suppose there is an interpretation \Jmc %Let $ \Jmc' $ be an arbitrary interpretation of $ \notrob(\Mmc,\Imc,\epsilon) $ 
	that satisfies 	
	%Eq.~\ref{rob:1} 
	%and Eq.~\ref{rob:xor} in
	$ \notrob(\Mmc,\Imc,\epsilon) $. In this case we want to show that \Mmc is \emph{not} $\epsilon$-robust for \Imc. 
	Since Eq.~\ref{rob:1} 
	%and Eq.~\ref{rob:xor}
	is satisfied, we have $ \hamming_n(\Imc,\Jmc) \leq \epsilon $,
	where $n$ is the dimension of the TsM with formula \Mmc.
	%	Eq.~\ref{rob:3} is satisfied when 
	By Theorem~\ref{thm:sat}, the expression	
	%$ \encts(\Mmc_{\vec{x}})_{[\vec{x}\rightarrow {\Jmc}]}\wedge o$
	%\neq \Mmc_{\vec{x}}(\Imc) $ 
	in Eq.~8	is satisfiable when the TsM formula classifies $\Jmc$ differently from the classification of \Imc
	(recall that we write 
	$\Mmc_{\vec{x}}$ instead of \Mmc just to make explicit the use of variables in $\vec{x}$).
	%	
	%	this happens iff
	%$ \Mmc_{\vec{x}}(\Jmc)=1$ % \neq% \Mmc_{\vec{x}}(\Jmc) $
	%	and 
	By Definition~\ref{p1}
	this holds iff
	$\Mmc$ is not $ \epsilon $-robust for \Imc.
	
	Conversely, if $ \notrob(\Mmc,\Imc,\epsilon) $ is not satisfiable
	then there is no interpretation \Jmc which satisfies
	Eq.~\ref{rob:1} and Eq.~\ref{rob:xor} (in other words, 
	with hamming distance at most $\epsilon$ from \Imc)
	and is classified differently from $\Mmc_{\vec{x}}(\Imc)$.
	%given to \Imc (Eq.~\ref{rob:3}). 
	This means that $\Mmc_{\vec{x}}$ is $ \epsilon$-robust for $ \Imc $.
	%(classified as negative).
	%Let \Jmc be the interpretation over $ \{x_1,\cdots,x_{n} \} $ such that 
	%$ x_i\in\Jmc $ iff $ x_i $ is set to true by $ \Jmc' $.
	%	
	%	for any interpretation 
	%	\Hmc, we have
	%%	$ \encts_x(\Mmc) $ is true iff
	%	$ \encts_x(\Mmc)_{[\vec{x}\rightarrow {\Hmc}]} $ iff
	%	$ \Mmc(\Jmc) = 1 $ with $ \Hmc\in\{ \Jmc,\Imc \} $.
	%	$ (\Leftarrow) $
\end{proof}

%theorem 3
\theoremthree*
\begin{proof}
	By Theorem~\ref{thm:robust},
	each $ c_i $ in Eq.~9 %\ref{unirob:1}
	is set to true iff the TsM \Mmc is not 
	$ \epsilon $-robust for the given   example $ \Imc_i $.
	Therefore, Eq.~\ref{unirob:2} is satisfied iff 
	the number of   examples that do not pass the $ \epsilon $-robustness condition
	is above the  threshold $ \lfloor\eta |S|\rfloor $. 
\end{proof}

%theorem 4
\theoremfour*
\begin{proof}%\todo{on going}
	%	With 
	%	and 
	By construction of 
	$ \encts(\Mmc_i) $, for $ 1\leq i \leq 2 $,
	the variables in $ \vec{x} $
	correspond to the input interpretation of the 
	\ac{TsM} $ \Mmc_i $.
	By Theorem~\ref{thm:sat} and since $ \vec{x} $
	is shared between the encodings of the two \acp{TsM},
	$ 	\Mmc_i(\Imc)=1 $ 
	iff $ \encts(\Mmc_i)_{[\vec{x}\rightarrow {\Imc}]}\wedge o^i  $ is satisfiable.
	%	iff $ \neg \encts_x(\Mmc_i)  $ is not satisfiable.
	%This happens iff both \acp{TsM} classify \Imc as a negative example.
	%Therefore, 
	The formula $(\encts(\Mmc_1) \wedge o^1)\leftrightarrow (\encts(\Mmc_2)\wedge o^2)$
	is falsified iff it is possible to find an \Imc
	which is classified differently by the \acp{TsM}.
\end{proof}

\theoremfive*
\begin{proof}
	This theorem can be proven with arguments similar to those used for robustness and universal robustness.
\end{proof}

\end{document}